\title{The GaussianSketch for Almost Relative Error Kernel Distance} %TODO Please add
\titlerunning{} %TODO optional, please use if title is longer than one line
\author{Jeff M. Phillips}{School of Computing, University of Utah}{jeffp@cs.utah.edu}{}{}%TODO mandatory, please use full name; only 1 author per \author macro; first two parameters are mandatory, other parameters can be empty. Please provide at least the name of the affiliation and the country. The full address is optional
\author{Wai Ming Tai}{School of Computing, University of Utah}{wmtai@cs.utah.edu}{}{}
\authorrunning{J. Phillips and W.M. Tai} %TODO mandatory. First: Use abbreviated first/middle names. Second (only in severe cases): Use first author plus 'et al.'
\keywords{Kernel Distance, Kernel Density Estimation, Sketching} %TODO mandatory; please add comma-separated list of keywords
\newcommand{\Eu}[1]{\ensuremath{\EuScript{#1}}}
\newcommand{\eps}{\varepsilon}
\newcommand{\R}{\ensuremath{\mathbb{R}}}
\newcommand{\etal}{\emph{et al.}\xspace}
\DeclareMathOperator{\poly}{\textsf{poly}}
\newcommand{\Var}{\ensuremath{\textsf{Var}}}
\newcommand{\prob}[1]{\ensuremath{\textbf{{\sffamily Pr}}\hspace{-.8mm}\left[#1\right]}}
\newcommand{\Tr}{\ensuremath{\textsf{Tr}}}
\newcommand{\TS}{\textsc{RecursiveTensorSketch}\xspace}
\newcommand{\GS}{\textsc{GaussianSketch}\xspace}
\newcommand{\GSHD}{\textsc{GaussianSketchHD}\xspace}
\newcommand{\norm}[1]{\left\lVert #1 \right\rVert}
\newcommand\numberthis{\addtocounter{equation}{1}\tag{\theequation}}
\newcommand{\abs}[1]{\left| #1 \right|}
\newcommand{\inner}[2]{\left\langle #1,#2 \right\rangle}
\newcommand{\DK}{\ensuremath{\texttt{D}^2_K}}
\newcommand{\DKo}{\ensuremath{\texttt{D}_K}}
\newcommand{\E}{\ensuremath{\textsf{E}}}
\newcommand{\HK}{\ensuremath{\Eu{H}_K}}
\newcommand{\vol}{\ensuremath{\mathsf{vol}}}
\begin{document}

\maketitle

%TODO mandatory: add short abstract of the document
\begin{abstract}
	We introduce two versions of a new sketch for approximately embedding the Gaussian kernel into Euclidean inner product space.  These work by truncating infinite expansions of the Gaussian kernel, and carefully invoking the RecursiveTensorSketch [Ahle et al. SODA 2020].  
	After providing concentration and approximation properties of these sketches, we use them to approximate the kernel distance between points sets.  These sketches yield almost $(1+\eps)$-relative error, but with a small additive $\alpha$ term.  In the first variants the dependence on $1/\alpha$ is poly-logarithmic, but has higher degree of polynomial dependence on the original dimension $d$.  In the second variant, the dependence on $1/\alpha$ is still poly-logarithmic, but the dependence on $d$ is linear.  
\end{abstract}

\section{Introduction}
Kernel methods are a pillar of machine learning and general data analysis.  These approaches consider classic problems such as PCA, linear regression, linear classification, $k$-means clustering which at their heart fit a linear subspace to a complex data set.  Each of the methods can be solved by only inspecting the data via a dot product $\inner{x}{p}$.  Kernel methods, and specifically the ``kernel trick,'' simply replaces these Euclidean dot products with a non-linear inner product operation.  The two most common inner products are the polynomial kernel $K_z(x,p) = (\inner{x}{p} + 1)^z$ and the Gaussian kernel $K(x,p) = \exp(-\|x-p\|^2)$.  

The ``magic'' of the kernel method works mainly because of the existence of a reproducing kernel Hilbert space (RKHS) $\HK $ associated with any positive definite (p.d.) kernel~\cite{SGFSL10} $K$.
It is a function space, so for any data point $x \in \R^d$, there is a mapping $\phi : \R^d \to \HK$ so $\phi(x) = K(x,\cdot)$.  Since $\phi(x)$ is a function with domain $\R^d$, and each ``coordinate'' of $\phi(x)$ is associated with another point $p \in \R^d$, there are an infinite number of ``coordinates,'' and $\HK $ can be infinite dimensional.  
However, since $\inner{\phi(x)}{\phi(p)}_{\HK} = K(x,p)$, this embedding does not ever need to be computed, we can simply evaluate $K(x,p)$.  
And life was good.  

However, at the dawn of the age of big data, it became necessary to try to explicitly, but approximately, compute this map $\phi$.  Kernel methods typically start by computing and then analyzing the $n \times n$ gram matrix $K_X$ where $(K_X)_{i,j} = K(x_i,x_j)$ for a data sets $X$ of size $n$.  As $n$ became huge, this became untenable.  In a hallmark paper, Rahimi and Recht~\cite{rahimi2007random} devised random Fourier features (RFFs) for p.d. kernels (with max value $1$, e.g., Gaussians) that compute a random map $\tilde \phi : \R^d \to \R^{\tilde D}$ so 
$\inner{\tilde \phi(x)}{\tilde \phi(p)}$ 
is an unbiased estimate of $K(x,p)$, and with probability at least $1-\delta$ has error $|K(x,p) - \inner{\tilde \phi(x)}{\tilde \phi(p)}| \leq \eps$.  For 
just one pair of points they require $\tilde D = O((1/\eps^2) \log (1/\delta))$, or 
for all comparisons among $n$ points $\tilde D_n = ((1/\eps^2) \log (n/\delta))$, or 
for any points in a region $\Lambda$ of volume $\vol(\Lambda)\leq V$, then $\tilde D_V = ((1/\eps^2) \log (V/\delta))$.

However, relative-error-preserving RKHS embeddings for p.d. kernels are impossible without some restriction on the size $n$ or domain $\Lambda$ of the data.  Consider $n$ data points each far from each other so any pair $x,p \in \R^d$ satisfies $K(x,p) < 1/n$.  In any relative-error-approximate embedding $\hat \phi : \R^d \to \R^{\hat D}$, each point must be virtually orthogonal to all other points, and hence $\Omega(n)$ dimensions are required~\cite{LN17}.  
%\jeff{we need a specific result here -- or something more specific about a packing.} 

%Relative-error-preserving RKHS embeddings for p.d. kernels have eluded the community.  Indeed recently, Avron \etal~\cite{AKMMVZ17} shows a lower bound which almost implies that RFFs cannot achieve relative error for inner products.  It shows a $\lambda$-regularized problem requires $\Omega(n/\lambda)$ dimensional RFFs to achieve some relative error, however, requires $\lambda \geq 10/n > 0$; so it stops just short of showing a lower bound for the unregularized version.  
Instead, to obtain (almost) relative-error results in big data sets, researchers have relied on other approaches such as sampling~\cite{WGM17}, exploiting structure of p.d. Gram matrices~\cite{MW17}, devising modified RFFs for regularized kernel regression~\cite{AKMMVZ17}, or building data structures for kernel density estimate queries~\cite{CS17}.  
%\jeff{review, check others, Musco-Musco17? Bach17?}

%%%%%%%%%%%%%%%%%%%%%%%%%%%%%%%%%%%%%%%%%%%%%%%%%%%%%%%%%%%%%%%%%%%%
\subparagraph*{The kernel distance and data set embeddings.}
To address these difficulties, we first turn our attention from the inner product 
$\inner{\phi(x)}{\phi(p)}_{\HK} = K(x,p)$ 
in the RKHS to the natural distance it implies.  Before stating this distance, we generalize the inner product to point sets $P \subset \R^d$ (which extends naturally to probability distributions $\mu_P$ with domain $\R^d$).  We treat $P$ as a discrete probability distribution with uniform $1/|P|$ weight on each point.  This can be represented in $\HK$ as $\Phi(P) = \frac{1}{|P|} \sum_{x \in P} \phi(x)$, known as the kernel mean~\cite{MFSS17}.  Indeed, for any query point $p \in \R^d$, the inner product $\inner{\Phi(P)}{\phi(p)}_{\HK} = \frac{1}{|P|} \sum_{x \in P} K(x,p)$ is precisely the kernel density estimate at $p$.  For two point sets $P,Q \subset \R^d$ we define $\kappa(P,Q) = \frac{1}{|P|}\frac{1}{|Q|} \sum_{x \in P} \sum_{y \in Q} K(p,q) = \langle \Phi(P), \Phi(Q) \rangle_{\HK}$.  

Now the kernel distance~\cite{GI,current} (alternatively known as the current distance~\cite{GlaunesJoshi:MFCA:06} or maximum mean discrepancy~\cite{GBRSS12,smola}) is defined
\[
\DKo(P,Q) = \| \Phi(P) - \Phi(Q)\|_{\HK} 
= \sqrt{\kappa(P,P) + \kappa(Q,Q) - 2 \kappa(P,Q)}.   
\]
Under a slightly restricted class of kernels (a subset of p.d. kernels), called \emph{characteristic} kernels~\cite{SFL11}, this distance is a metric.  These include the Gaussian kernels which we focus on hereafter.  
This distance looks and largely acts like Euclidean distance; indeed, restricted to any finite-dimensional subspace, it is equivalent to Euclidean distance.

In data analysis and statistics, kernel mean is a compact way to represent a point set distribution.
One can also use kernel distance to compare different point set as opposed to more expensive measure such as Wasserstein distance.
In practice, there are various applications such as hypothesis test and geometric search (see section \ref{sec:implications} for detail discussion) that use kernel distance as a core component.
We suggest the reader refer to \cite{sejdinovic2013equivalence, sriperumbudur2016optimal} for more details on the statistical perspective of kernel distance.
Therefore, making computation of the kernel distance scalable by a kernel distance embedding is of significant importance for those downstream applications.
More generally, one can view oblivious kernel distance embedding as special case of oblivious subspace embedding for RKHS \cite{muandet2016kernel, ahle2019oblivious}, which gives a stronger guarantee than a subspace in the RKHS is preserved within relative error.
However, many application of kernel distance do not require such a strong guarantee, which generally attain worse results (see below for more detail comparison).

So a natural question to ask is if this distance is preserved within relative error via some approximate lifting.  Clearly RFFs guarantee additive $\eps$-error.  However, relate this problem to the Johnson-Lindenstrauss (JL) Lemma~\cite{JL84}: JL describe a family of random projections from a high-dimensional space to a $D'$-dimensional space which preserve $(1+\eps)$-relative error on Euclidean distance, again with $D' = O((1/\eps^2) \log(n/\delta))$ for any ${n \choose 2}$ pairs of distances, succeed with probability $1-\delta$, but only guarantees additive error on inner products.  

Moreover, it is possible to apply the JL Lemma to create such an approximate embedding.  First for any set of $n$ points $X$, we can create $n \times n$ Gram matrix $K_X$ (that is positive definite), and decompose it to $K_X = B_X B_X^T$.  Then each row $(B_X)_i$ in $B_X$ is the $n$-dimensional vector representation of the $i$th data point, and the Euclidean distance $\|(B_X)_i - (B_X)_j\|_2$ is the kernel distance between data points $i$ and $j$~\cite{Mer09,Aronszajn1950}.  Then we can apply JL on these rows $\{(B_X)_i\}$ to obtain such an approximate embedding.  However, this embedding is not oblivious to the data (necessary for many big data settings like streaming) and still requires $\Omega(n^2)$ time just to create the Gram matrix, not to mention the time for decomposition.  

Another recent approach~\cite{CP17} analyzed RFFs for this task, and shows that these approximate embeddings do guarantee relative error on the kernel distance, but only between each pair of points $x,p \in \R^d$ (e.g., so $\frac{\|\hat \phi(x) - \hat \phi(p)\|}{\DKo(x,p)} \in (1 \pm \eps)$), and as we describe next many downstream analysis tasks require the distance preserved between point \emph{sets}.  
Alternatively, if we assume $\DK(P,Q) > \alpha$, then standard RFFs can provide a relative error guarantee using $\tilde D = O(\frac{1}{\eps^2 \alpha^2} \log \frac{1}{\delta})$.  However, such a large factor in $\alpha$ is undesirable, since typically $\alpha \ll \eps$.

%%%%%%%%%%%%%%%%%%%%%%%%%%%%%%%%%%%%%%%%%%%%%%%%%%%%%%%%%%%%%%%%%%%
%\subparagraph*{Data Set Embeddings.}

%	A powerful aspect of RKHS embeddings is how they represent point sets $P \subset \R^d$, or in general any probability distribution $\mu_P$ with domain $\R^d$; we hence forth focus on point sets for simplicity.  In particular, if we treat $P$ as a discrete probability distribution with uniform $1/|P|$ weight on each point, then we can represent this in $\HK$ as $\Phi(P) = \frac{1}{|P|} \sum_{x \in P} \phi(x)$, known as the kernel mean~\cite{MFSS17}.  Indeed, for any query point $p$, the inner product $\inner{\Phi(P)}{\phi(p)}_{\HK} = \frac{1}{|P|} \sum_{x \in P} K(x,p)$ is precisely the kernel density estimate at $p$.  Moreover, to run kernel $k$-means we are computing objects which lie in the convex  combination of the points $\{\phi(x_1), \phi(x_2), \ldots\} \subset \HK$.  Hence the Chen \etal~\cite{CP17} result which gives the guarantee mentioned in last paragraph does not guarantee anything about these analysis.  

%\waiming{"The kernel distance has natural extension to these point set representation as" - which "these"?}
%	Using this notation, the kernel distance has a natural extension to these point set representations as 
%	\[
%	\DKo(P,Q) = \| \Phi(P) - \Phi(Q)\|_{\HK} 
%	= \sqrt{\kappa(P,P) + \kappa(Q,Q) - 2 \kappa(P,Q)},
%	\]
%	where $\kappa(P,Q) = \frac{1}{|P|}\frac{1}{|Q|} \sum_{x \in P} \sum_{y \in Q} K(p,q)$.  

%%%%%%%%%%%%%%%%%%%%%%%%%%%%%%%%%%%%%%%%%%%%%%%%%%%%%%%%%%%%%%%%%%%
\subparagraph*{Our Results.}
We provide two sketches $G : \R^d \to \R^D$ for the Gaussian kernel, improving on work of Rahimi and Recht~\cite{rahimi2007random} and Avron \etal~\cite{AKMMVZ17}, which achieves almost relative error for kernel distance.  
Let $F(X) = \frac{1}{|X|}\sum_{x \in X} G(x)$ extend the sketch to point sets $X  \subset \R^d$.  
Then we show that for two point sets $P,Q \subset \R^d$ 
\[
\left|\DK(P,Q) - \|F(P) - F(Q)\|^2 \right| \leq \eps \DK(P,Q) + \alpha.  
\]
As we can always reduce the dimension $G : \R^d \to \R^D$ using JL to about $D = 1/\eps^2$, 
we focus on reducing the runtime dependence, in particular the dependence on $\alpha$.  

In the first sketch (the \GS) to process a single point with $G(x)$ it takes 
$O\left( \frac{d^2}{\eps^2}\log\frac{d}{\eps} +ds\right)$ time, with $s = \Theta\left(\frac{\log ( d \exp(dL^2) / \alpha)}{\log (\frac{1}{L^2} \log(d \exp(dL^2) / \alpha))}\right)$, where $L$ describes the ($L_\infty$) radius of the domain containing $X$.  
%For the kernel distance problem $\tilde O$ hides $\log \frac{1}{\eps}$ factors, and for kernel PCA it also hides $\log n$ factors.  
So the dependence on $1/\alpha$ is less than a single logarithmic term.  

The second sketch (the \GSHD) is useful when the dimension $d$ is potentially large (it turns out to be very similar to a recent sketch in \cite{ahle2019oblivious}, but our analysis is different). 
Then the runtime to compute $G(x)$ is 
$O\left( \frac{s^3}{\eps^2}\log\frac{s}{\eps} + s^2d \right)$ 
where 
$s = \Theta\left(\frac{\log(4 \exp(2 R^2) / \alpha)}{\log (\frac{1}{R^2} \log (4 \exp(2 R^2) / \alpha))}\right)$, and $R$ is the ($L_2$) domain radius.   
Now the dependence on $1/\alpha$ is still poly-logarithmic, but the dependence on dimension $d$ is linear.  

For example, we can set $\alpha = n^{-C_1}$, $R= C_2\sqrt{\log n}$ and $L = C_3\sqrt{\log n}$ for some absolute constant $C_1,C_2,C_3$.
In low dimension, we have $s=\Theta(\frac{\log n }{ \log d})$ and the running time is $O(\frac{d^2}{\eps^2}\log\frac{d}{\eps} +\frac{d\log n}{\log d})$.
In high dimension, we have $s=\Theta(\log n)$ and the running time is $O\left( \frac{1}{\eps^2}\log^3n\log(\log n /\eps) + d\log^2n \right)$.

%%%%%%%%%%%%%%%%%%%%%%%%%%%%%%%%%%%%%%%%%%%%%%%%%%%%%%%%%%%%%%%%%%%
\subparagraph*{Implications.}
Several concrete applications work directly on this kernel distance between point sets. 
First, the kernel two-sample test~\cite{GBRSS12,MFSS17} is a non-parametric way to perform hypothesis tests between two empirical distributions; simply, the null hypothesis of them being drawn from the same distribution is rejected if the kernel distance is sufficiently large.  
While the sketched kernel two-sample test has proven effective under additive error~\cite{FastMMD}, when the significance threshold is $\Theta(1/n)$, the RFF-based solutions require time $O(n^2)$, no better than brute force; but setting $\eps$ constant and $\alpha = 1/n$, our sketches provide near-linear or almost-linear time runtimes.  
Second, devising a Locality Sensitive Hash (LSH) between point sets (or geometrically-aware LSH for probability distributions) has lacked a great general solution.  Despite progress in special cases (e.g., for polygons~\cite{CCL17}, curves~\cite{DS17}), more general distances between geometric distributions, like Earth-Mover distance require $\Omega(\log s)$ distortion on a domain with at least $s$ discrete points~\cite{AIK08}. 
In general, an LSH requires relative error to properly provide $(1+\eps)$-approximate nearest neighbor results.    
In Section \ref{sec:implications} we specify how our new almost relative-error embeddings for the kernel distance provide efficient solutions for these applications.

Furthermore, this embedding can be composed with a Johnson-Lindenstrauss-type embedding~\cite{JL84,AL09,AL11,Ach03,Woo14} to create an overall oblivious embedding of dimension roughly $O(\frac{1}{\eps^2} \log \frac{1}{\delta})$, that is with no dependence on $1/\alpha$ or $d$ (or $n$ or domain radius $L$ or $R$ in the \emph{for each} setting), and roughly the same guarantees.  

\subsection{Comparison to Other Recent Work on Large Data and Kernels}
Recent related works on kernel approximation do not provide our guarantees; we survey here work that addresses similar problems, and often require similar sets of error parameters.

%%%%%%%%%%%%%%%%%%%%%%%%%%%%%%%%%%%%%%%%%%%%%%%%%%%%%%%%%%%%%%%%%%%%
\subparagraph*{Approximated KDEs.}	
Charikar and Siminelakis~\cite{CS17} describe a data structure of size $n \hat D$ and query time $\hat D$, which answers $\kappa(P,t)$ queries within $(1+\eps)$-relative error as long as $\kappa(P,t) > \alpha$; it requires $\hat D = O(\frac{1}{\eps^2} \frac{1}{\sqrt{\alpha}} \log \frac{1}{\delta} e^{O(\log^{2/3} n \log \log n)})$.  However, this cannot argue much about how large $\DKo(P,Q)$ has to be for this to achieve relative error on the kernel distance since it could be $\DKo(P,Q)$ is small but $\kappa(P,t)$ and $\kappa(P,P)$ are both large.  Moreover, its guarantees only work for a single point set $P$ with point queries $t$, not for two or more points sets $P,Q$, as we argue many downstream data analysis tasks require.  

%%%%%%%%%%%%%%%%%%%%%%%%%%%%%%%%%%%%%%%%%%%%%%%%%%%%%%%%%%%%%%%%%%%%
\subparagraph*{Approximated kernel regression.}	
Avron \etal~\cite{AKMMVZ17} modify the RFF embeddings using different sampling probability related to the statistical leverage in the kernel space.  
%It is uniform over a region $[-\tilde \gamma,\tilde \gamma]^d$, with $\tilde \gamma = 10 \sqrt{\log (n/\tilde \lambda)}$ (somewhat similar to a region $[-\gamma,\gamma]^d$ we will define), but then decays smoothly as it extends beyond the area.  
This approximates a $\lambda$-regularized kernel regression problem, creating a $\tilde D$-dimensional embedding; that is for an $n \times n$ gram matrix $K_X$, and a regularization parameter $\lambda$ it creates a $n \times \tilde D$ matrix $Z$ so 
$
(1-\eps) (K_X + \lambda I_n) \preceq Z Z^* + \lambda I_n \preceq (1+\eps) (K_X + \lambda I_n), 
$	
using 	
$
\tilde D 
= 
O(\frac{1}{\eps^2} (L^d \log^{d/2} (n/ \lambda)  + \log^{2d} (n/ \lambda)) \log (s_{ \lambda}(K)/\delta))
$.  
%with typically $s_{\lambda} \approx \sqrt{n}$~\cite{Bach13}.  
Following our forthcoming methods for analysis, one can modify this result to $(1+\eps)$-approximate the kernel distance, with an additive $\alpha$ term, with an embedding of dimension $D = O\left(\frac{1}{\eps^2} (L^d \log^{d/2} \frac{n}{\alpha}  + \log^{2d} \frac{n}{\alpha}) \log \frac{n}{\delta}\right)$.

Also, Ahle \etal \cite{ahle2019oblivious} recently showed that one can create such $\tilde{D}$-dimensional embedding where $\tilde{D} = O(\frac{1}{\eps^2}(R^2 + \log\frac{n}{\eps\lambda})^5s_\lambda(K_X))$ in $O(\frac{1}{\eps^2}(R^2 + \log\frac{n}{\eps\lambda})^6s_\lambda(K_X))$ time for each data point.
Again, in our setting, one can interpret this result as $(1+\eps)$-approximate the kernel distance, with an additive $\alpha$ term, in $O(\frac{1}{\eps^2}(R^2 + \log\frac{n}{\eps\alpha})^6s_\alpha(K_X))$ time.

Compared to our bounds (adapted to our problem using our techniques), these depend on $n$ and $s_\lambda$ (ours do not), the low-d one is exponential in $d$ (ours is polynomial), and the other powers are larger.  
\subparagraph*{Approximate Kernel PCA.}

%and for a point set $X \subset \R^d$ with kernel Gram matrix $K_X = B_X B_K^T$ then $\|B_X - V V^T B_X\|_\zeta^2 \leq (1+\eps) \|B_X - [B]_k\|_\zeta^2 + \alpha$.

Suppose we are given a data set $X = \{x_1, \ldots, x_n\} \subset \R^d$, and want to find a low rank (rank $k$) approximation of $X_\phi = \{\phi(x_1), \phi(x_2), \ldots, \phi(x_n)\} \in \HK$.  
In particular, this can be described concretely in the context of the Gram matrix $K_X$ and its decomposition $B_X B_X^T$.  
Given any $n \times m$ matrix $M$, let $[M]_k$ be its best rank-$k$ approximation.  
A natural question is to find a rank-$k$ matrix $\tilde K_X$ so
\[
\norm{K_X - \tilde{K}_X}_F^2 \leq (1+\eps)\norm{K_X - [K_X]_k}_F^2.  
\]
%\jeff{Explain the Kernel PCA problem here.}	
While most previous work~\cite{drineas2005nystrom,lopez2014randomized,GPP16,SS17,UMMA18} has focused on providing absolute (or additive) error bounds.  For instance, they showed roughly $\|K_X - \tilde{K}_X\|_F^2 \leq \norm{K_X - [K_X]_k}_F^2 + \eps n$ using e.g., Nystr\"om sampling and RFFs.  
%, and roughly $1/\eps^2$ samples of $X$ (i.e., Nystr\"om approaches)~\cite{drineas2005nystrom}
%and RFFs ~\cite{lopez2014randomized} or RFFs and other sketches~\cite{GPP16}.  
%Improved understanding of the statistical rates of these processes has been recently refined~\cite{SS17,UMMA18}, but the error term still grows polynomially with $n$.    
More recently, Musco and Woodruff~\cite{MW17b} for p.d. Gram matrices $K_X$ show how to efficiently find $\tilde{K}_X$ with relative error.  
This only requires $O(nk^{\omega-1} \cdot \poly(\log n/\eps))$ inspections of entries of $K_X$, where $\omega < 2.373$ is the matrix multiplication exponent.  This is not data oblivious, and uses properties of the p.d. matrix, so it does not provide an embedding sketch.

%, a $n \times m$ matrix; that is for any $n \times k$ orthogonal basis matrix $U$, let $\|M - [M]_k\|_\zeta^2 \geq \|M - U U^T M\|_\zeta^2$ for $\zeta \in \{2,F\}$.  
A closely related problem is approximate kernel PCA problem which is to find a $n \times k$ orthonormal matrix $V$ so that
\[
\|B_X - V V^T B_X\|_F^2 \leq (1+\eps)\|B_X - [B_X]_k\|_F^2.  
\] 
%Or nearly equivalently \jeff{WaiMing: can you comment on if these are equivalent or nearly so, the latter is what we prove} that $\|B_X - V V^T B_X\|_F^2 \leq \|B_X - [B_X]_k\|_F^2 + \Eu{E}'$.  
The RKHS basis $V$, provides a compact and non-linear set of attributes to describe a complex data set $X$, and has many uses in analyzing complex data which lacks strong linear correlations.  
Musco and Woodruff~\cite{MW17} provide an algorithm with runtime $O(\mathsf{nnz}(X)) + \tilde O(n^{\omega+1.5} (\frac{k}{\sigma_{k+1} \eps^2})^{\omega - 1.5})$; which has polynomial dependence on $1/\sigma_{k+1}$.  They leave open whether this can be removed or reduced while maintaining only roughly $\mathsf{nnz}(X)$ dependence on $X$.  
The matrix $V$ returned by their algorithm can be used to approximate the matrix $K_X$ by writing $B_XPB_X^T$ where $P$ is the projection onto the row span of $VV^TB_X$.

Our techniques can be combined with the a sketch for the polynomial kernel~\cite{ANW14} to explicitly solve for $V$ so 
\[
\|B_X - V V^T B_X\|_F^2 \leq (1+\eps)\|B_X - [B_X]_k\|_F^2 + \alpha.  
\]
with similar dimensions required for approximating the kernel distance; the $s$ parameter increases roughly by $\log n / \log \log n$.  This is detailed in Appendix \ref{sec:kpca}.  If the data size $n$ has a known bound, then this provides an oblivious sketch for this almost relative error kernel PCA problem.  Moreover, replacing the $\sigma_{k+1}$ with $\eps \alpha$, it almost answers the kernel PCA $\mathsf{nnz}(X)$ question of Musco and Woodruff~\cite{MW17} -- however our algorithm does not depend on the number-of-non-zeros of $X$ through our sketches, so we leave as an open question if our sketches $G(x)$, particular the \GSHD or similar, can be generated in time $O(\mathsf{nnz}(x) \mathsf{polylog}(1/\alpha) + n\mathsf{poly}(k,1/\eps, \log (1/\alpha))$.

%%%%%%%%%%%%%%%%%%%%%%%%%%%%%%%%%%%%%%%%%%%%%%%%%%%%%%%%%%%%%%%%%%%
%%%%%%%%%%%%%%%%%%%%%%%%%%%%%%%%%%%%%%%%%%%%%%%%%%%%%%%%%%%%%%%%%%%
%%%%%%%%%%%%%%%%%%%%%%%%%%%%%%%%%%%%%%%%%%%%%%%%%%%%%%%%%%%%%%%%%%%

\section{The GaussianSketch and its Properties}
In this section we describe our new sketches for approximate mapping from $\R^d$ to an RKHS associated with a Gaussian kernel.  They are based on the \TS of Ahle \etal~\cite{ahle2019oblivious}, so we first review its properties.  

%%%%%%%%%%%%%%%%%%%%%%%%%%%%%%%%%%%%%%%%%%%%%%%%%%%%%%%%%%%%%%%%%%%
\subparagraph*{The RecursiveTensorSketch.}
We first introduce \TS hash family~\cite{ahle2019oblivious}.
Given positive integers $n$, $m$ and $k$, $\TS_{n,m,k}$ is the family of hash functions $T:\mathbb{R}^{n^k}\rightarrow\mathbb{R}^m$ as constructed in \cite{ahle2019oblivious}.
This hash family will be used to construct our main sketch and has the following guarantee \cite{ahle2019oblivious}: suppose $u,v\in\mathbb{R}^{n^k}$ and picking $m=O(\frac{k}{\eps^2})$, then the expectation $\E(\inner{T(u)}{T(v)}) = \inner{u}{v}$ and the variance $\Var(\inner{T(u)}{T(v)}) \leq \frac{\eps^2}{10}\norm{u}^2\norm{v}^2$.
Moreover, the running time of computing $T(x)$ for any $x\in \mathbb{R}^{n^k}$ is $O(km\log m + kn)$.

\subparagraph*{The GaussianSketch.}
Now, we can define the hash family of the first sketch for the Gaussian kernel \GS. 
Given a vector $x\in\mathbb{R}^d$ and a positive integer $s$, we first define $d$ vectors $y^{(1)}_x\dots,y^{(d)}_x\in \mathbb{R}^s$ such that $i$th coordinate of $y^{(j)}_x$ is $\exp(-x_j^2)\sqrt{\frac{2^{i-1}}{(i-1)!}}x_j^{i-1}$.
Given an integer $m$, define $\GS_{m,s}$ to be the family of hash functions that if $G$ is in it, then $G(x) = T(y^{(1)}_x\otimes\cdots\otimes y^{(d)}_x)$ where $T$ is randomly chosen from $\TS_{s,m,d}$.

Here, $x \otimes y$ is Kronecker product.
Namely, given $x\in \mathbb{R}^p$ and $y\in\mathbb{R}^q$, $x\otimes y$ is a $pq$ dimensional vector indexed by two integers $i,j$ where $i=1,\dots,p$ and $j=1,\dots ,q$ such that $(x\otimes y)_{i,j} = x_i\cdot y_j$.
For notational convenience, we extend Kronecker product when $p$ and $q$ are infinity.
Namely, given $\{x_i\}_{i=1}^\infty$ and $\{y_j\}_{j=1}^\infty$ are infinite sequences, $x\otimes y$ is also an infinite sequence indexed by two positive integers $i,j$ such that $(x\otimes y)_{i,j} = x_i\cdot y_j$.
Also, denote $x^{\otimes k} = x\otimes x^{\otimes k-1}$ and $x^{\otimes 0} = 1$.

The rationale for the \GS comes from the following infinite expansion of the Gaussian kernel.
Define $\bar y^{(j)}_x$ (for $j \in [d]$) as the infinite dimensional analog of $y^{(j)}_x$ with its $i$th coordinate as $\exp(-x_j^2)\sqrt{\frac{2^{i-1}}{(i-1)!}}x_j^{i-1}$.

\begin{lemma}\label{lem:gaussian_inner}
	For $x,p \in \R^d$
	\begin{align*}
	&
	\exp(-\norm{x-p}^2) \\
	& = 
	\sum_{j_1=0}^\infty\cdots\sum_{j_d=0}^\infty\left(\exp(-\norm{x}^2)\left(\prod_{i=1}^{d}\sqrt{\frac{2^{j_i}}{j_i!}}x_{i}^{j_i} \right)\right)\left(\exp(-\norm{p}^2)\left(\prod_{i=1}^{d}\sqrt{\frac{2^{j_i}}{j_i!}}p_{i}^{j_i} \right)\right)  
	\\ & = 
	\inner{\bar y^{(1)}_x\otimes\cdots\otimes \bar y^{(d)}_x}{\bar y^{(1)}_p\otimes\cdots\otimes \bar y^{(d)}_p}.
	\end{align*}
\end{lemma}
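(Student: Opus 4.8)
The plan is to reduce the $d$-dimensional Gaussian to a product of one-dimensional Gaussians, expand each factor through the exponential series, and then recognize the result as an inner product of Kronecker products. First I would write $\exp(-\norm{x-p}^2) = \prod_{i=1}^d \exp(-(x_i - p_i)^2)$, using the fact that the squared Euclidean norm splits coordinatewise. This decouples the problem into $d$ independent one-dimensional identities.

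Next, for each coordinate $i$ I would use the algebraic identity $-(x_i - p_i)^2 = -x_i^2 - p_i^2 + 2 x_i p_i$ to factor $\exp(-(x_i - p_i)^2) = \exp(-x_i^2)\exp(-p_i^2)\exp(2 x_i p_i)$, and then expand the cross term by the Taylor series $\exp(2 x_i p_i) = \sum_{k=0}^\infty \frac{2^k}{k!} x_i^k p_i^k$. Distributing the two scalar factors symmetrically across each summand gives $\exp(-(x_i - p_i)^2) = \sum_{k=0}^\infty \left(\exp(-x_i^2)\sqrt{\frac{2^k}{k!}} x_i^k\right)\left(\exp(-p_i^2)\sqrt{\frac{2^k}{k!}} p_i^k\right)$, which is exactly $\inner{\bar y^{(i)}_x}{\bar y^{(i)}_p}$ once we re-index by $k = (\text{coordinate index}) - 1$ to match the definition of $\bar y^{(i)}_x$.

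It then remains to assemble the $d$ per-coordinate identities into the claimed Kronecker-product form. I would invoke the standard factorization of inner products over tensor products, $\inner{u_1 \otimes \cdots \otimes u_d}{v_1 \otimes \cdots \otimes v_d} = \prod_{i=1}^d \inner{u_i}{v_i}$, applied to $u_i = \bar y^{(i)}_x$ and $v_i = \bar y^{(i)}_p$. Expanding the product of the $d$ one-dimensional sums term by term recovers the middle multi-index expression in the statement, and the definition of the (infinite) Kronecker product gives the final line.

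The only genuinely delicate point is convergence: the coordinates $x_i^k$ can be negative, so these series are not termwise positive, and the factorization of the inner product over \emph{infinite}-dimensional Kronecker products must be justified by absolute convergence rather than by a finite manipulation. I would handle this by noting that $\sum_{k=0}^\infty \frac{(2\abs{x_i p_i})^k}{k!} = \exp(2\abs{x_i p_i}) < \infty$ for each $i$, so every one-dimensional series converges absolutely; this legitimizes both the rearrangement implicit in passing from the product of sums to the single multi-index sum, and the interchange of summation with the finite product over $i$. Everything else is routine index bookkeeping.
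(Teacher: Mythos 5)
Your proposal is correct and follows essentially the same route as the paper's proof: both rest on the factorization $\exp(-\norm{x-p}^2)=\exp(-\norm{x}^2)\exp(-\norm{p}^2)\exp(2\inner{x}{p})$, the coordinatewise Taylor expansion of the cross term, and the symmetric split of $\frac{2^{j}}{j!}$ into square roots; you merely decouple into one-dimensional identities first and then tensor them back together via $\inner{u_1\otimes\cdots\otimes u_d}{v_1\otimes\cdots\otimes v_d}=\prod_i\inner{u_i}{v_i}$, whereas the paper expands the product of sums into a multi-index sum directly. Your added remark on absolute convergence is a welcome (and correct) justification of the rearrangements that the paper leaves implicit.
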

\begin{proof}
	\begin{align*}
	&
	\exp(-\norm{x-p}^2)\\
	& =
	\exp(-\norm{x}^2)\exp(-\norm{p}^2)\exp(2\inner{x}{p}) \\
	& =
	\exp(-\norm{x}^2)\exp(-\norm{p}^2)\prod_{i=1}^{d}\exp(2x_ip_i) \\
	& =
	\exp(-\norm{x}^2)\exp(-\norm{p}^2)\prod_{i=1}^{d}\left(\sum_{j=0}^\infty \frac{1}{j!}(2x_ip_i)^j \right) \quad \text{by Taylor expansion of $\exp(\cdot)$}\\
	& =
	\exp(-\norm{x}^2)\exp(-\norm{p}^2)\sum_{j_1=0}^\infty\cdots\sum_{j_d=0}^\infty \left(\prod_{i=1}^{d}\frac{1}{j_i!}(2x_{i}p_{i})^{j_i} \right) \\
	& =
	\sum_{j_1=0}^\infty\cdots\sum_{j_d=0}^\infty\left(\exp(-\norm{x}^2)\left(\prod_{i=1}^{d}\sqrt{\frac{2^{j_i}}{j_i!}}x_{i}^{j_i} \right)\right)\left(\exp(-\norm{p}^2)\left(\prod_{i=1}^{d}\sqrt{\frac{2^{j_i}}{j_i!}}p_{i}^{j_i} \right)\right)
	\\ & = 
	\inner{\bar y^{(1)}_x\otimes\cdots\otimes \bar y^{(d)}_x}{\bar y^{(1)}_p\otimes\cdots\otimes \bar y^{(d)}_p}. \qedhere
	\end{align*}
\end{proof}
Note that the Gaussian sketch takes as input one element of these inner products, but trimmed so that each $\bar y^{(j)}_x$ is trimmed to $y^{(j)}_x$ (without the $\;\bar{}\;$ marker) that only has $s$ terms each.

%%%%%%%%%%%%%%%%%%%%%%%%%%%%%%%%%%%%%%%%%%%%%%%%%%%%%%%%%%%%%%%%%%%
\subparagraph*{The GaussianSketchHD.}
We can also define another hash family of sketches for the Gaussian kernel \GSHD, which works better for high dimension $d$, but will have worse dependence on other error and domain parameters.
For $j=1,\dots,s$, it will use $T_j$ as randomly chosen from $\TS_{d,m_j,j-1}$.
Given a vector $x\in\mathbb{R}^d$,  a positive integer $s$, and $s$ positive integers $m_1,\dots,m_s$, define $\GSHD_{m_1,\dots,m_s,s}$ to be the family of hash functions that if $G$ is in it, then $G(x)\in\mathbb{R}^{m}$ with $(m_{j-1}+1)$th coordinate to $m_j$th coordinate be $\sqrt{\frac{2^{j-1}}{(j-1)!}}\exp(-\norm{x}^2)T_j(x^{\otimes j-1}) = T_j(z_x^{(j)}) \in \R^{m_j}$ where $z_x^{(j)} = \sqrt{\frac{2^{j-1}}{(j-1)!}}\exp(-\norm{x}^2) x^{\otimes j-1} \in \R^{d^{j-1}}$ and $m = \sum_{j=1}^s m_j$.
Denote  $z_x$ the $\frac{d^s-1}{d-1}$ dimensional vector where the first coordinate is $z_x^{(1)}$, the next $d$ coordinates are $z_x^{(2)}$, the next $d^2$ coordinates are $z_x^{(3)}$, and so on.
The \GSHD uses the following, a different infinite expansion of the Gaussian kernel (also explored by Cotter \etal~\cite{cotter2011explicit}).  
\begin{lemma}~\label{lem:gaussian_inner_hd}
	For $x,p\in\mathbb{R}^d$,
	\begin{align*}
	\exp(-\norm{x-p}^2) 
	= 
	\sum_{i=0}^\infty \inner{ \exp(-\norm{x}^2)\sqrt{\frac{2^i}{i!}}x^{\otimes i}}{\exp(-\norm{p}^2)\sqrt{\frac{2^i}{i!}}p^{\otimes i}}
	= 
	\sum_{i=0}^\infty \inner{ z_x^{(i)}}{z_p^{(i)}}
	\end{align*}
\end{lemma}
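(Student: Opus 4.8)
The plan is to mirror the argument of Lemma~\ref{lem:gaussian_inner}, but instead of splitting the exponential coordinate-by-coordinate, I keep the dot product $\inner{x}{p}$ intact and expand it as a whole. The starting point is the same algebraic factorization used there: write
\[
\exp(-\norm{x-p}^2) = \exp(-\norm{x}^2)\exp(-\norm{p}^2)\exp(2\inner{x}{p}).
\]
The first two factors are already in the desired separated form (one depending only on $x$, the other only on $p$), so the work is entirely in expanding the cross term $\exp(2\inner{x}{p})$.

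Next I would apply the Taylor expansion of the scalar exponential, which converges absolutely for every real argument, to get
\[
\exp(2\inner{x}{p}) = \sum_{i=0}^\infty \frac{(2\inner{x}{p})^i}{i!} = \sum_{i=0}^\infty \frac{2^i}{i!}\inner{x}{p}^i.
\]
The key step is then to recognize $\inner{x}{p}^i$ as an inner product of tensor powers. I would invoke the standard multiplicativity of the inner product under Kronecker products, namely $\inner{a\otimes b}{c\otimes d} = \inner{a}{c}\inner{b}{d}$, which by induction on $i$ gives $\inner{x^{\otimes i}}{p^{\otimes i}} = \inner{x}{p}^i$ (consistent with the convention $x^{\otimes 0}=1$ stated in the setup). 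Substituting this in, each term $\frac{2^i}{i!}\inner{x}{p}^i$ becomes $\frac{2^i}{i!}\inner{x^{\otimes i}}{p^{\otimes i}}$.

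Finally I would fold the two scalar prefactors and the $\frac{2^i}{i!}$ symmetrically into the two sides of the inner product, distributing $\sqrt{2^i/i!}$ and $\exp(-\norm{x}^2)$ to the $x$-side and the analogous quantities to the $p$-side, yielding
\[
\exp(-\norm{x-p}^2) = \sum_{i=0}^\infty \inner{\exp(-\norm{x}^2)\sqrt{\tfrac{2^i}{i!}}\,x^{\otimes i}}{\exp(-\norm{p}^2)\sqrt{\tfrac{2^i}{i!}}\,p^{\otimes i}},
\]
which is precisely $\sum_{i=0}^\infty \inner{z_x^{(i)}}{z_p^{(i)}}$ by the definition of $z_x^{(i)}$. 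The only point requiring mild care is the tensor-product identity $\inner{x^{\otimes i}}{p^{\otimes i}}=\inner{x}{p}^i$; everything else is routine rearrangement, and since the exponential series converges absolutely the manipulations are fully justified with no interchange-of-summation concerns. This derivation is in fact strictly simpler than that of Lemma~\ref{lem:gaussian_inner}, since we never expand $\inner{x}{p} = \sum_i x_i p_i$ into per-coordinate sums.
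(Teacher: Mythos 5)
Your proposal is correct and follows essentially the same route as the paper's own proof: factor out $\exp(-\norm{x}^2)\exp(-\norm{p}^2)$, Taylor-expand $\exp(2\inner{x}{p})$, rewrite $\inner{x}{p}^i$ as $\inner{x^{\otimes i}}{p^{\otimes i}}$, and distribute the scalar factors symmetrically. Your explicit justification of the tensor-power identity by induction is a small added courtesy the paper omits, but the argument is otherwise identical.
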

\begin{proof}	
	\begin{align*}
	&
	\exp(-\norm{x-p}^2) \\
	& =
	\exp(-\norm{x}^2)\exp(-\norm{p}^2)\exp(2\inner{x}{p}) \\
	& =
	\exp(-\norm{x}^2)\exp(-\norm{p}^2)\sum_{i=0}^\infty \frac{1}{j!}\left(2\inner{x}{p}\right)^j & \text{by Taylor expansion of $\exp(\cdot)$} \\
	& =
	\exp(-\norm{x}^2)\exp(-\norm{p}^2)\sum_{i=0}^\infty \frac{2^j}{j!}\inner{x^{\otimes j}}{p^{\otimes j}} \\
	& =
	\sum_{j=0}^\infty \inner{ \exp(-\norm{x}^2)\sqrt{\frac{2^j}{j!}}x^{\otimes j}}{\exp(-\norm{p}^2)\sqrt{\frac{2^i}{j!}}p^{\otimes j}}   &\qedhere
	\end{align*}	
\end{proof}

%%%%%%%%%%%%%%%%%%%%%%%%%%%%%%%%%%%%%%%%%%%%%%%%%%%%%%%%%%%%%%%%%%%
%\subparagraph*{Overview of properties of the GaussianSketch and GaussianSketchHD.}
%
\subsection{Concentration Bounds for GaussianSketch and GaussianSketchHD}
The sketches will inherit the concentration properties of the \TS.  Similar observations were recently observed by Ahle \etal~\cite{ahle2019oblivious}.
%; the proofs follow from expansions and rearrangement of terms, and then Chebyshev or Markov inequalities.  
%
Consider a weighted set of elements $X \subset \R^d$ with weights $\alpha_x$ for $x \in X$, and we use the general concentration bounds for these under the \GS.  

\begin{lemma}[\cite{ahle2019oblivious}]\label{lem:GS-Cheb}
	Let $G$ be a randomly chosen hash function in $\GS_{m,s}$ with $m = O\left(\frac{d}{\eps^2}\right)$.  
	Let $v = \sum_{x \in X} \alpha_x y_x^{(1)}\otimes \dots\otimes y_x^{(d)}$, then 
	%\[
	$\E\left[ \norm{ \sum_{x \in X} \alpha_x G(x)}^2 \right] = \|v\|^2$
	and
	%\;\;\; \text{ and } \;\;\;
	$\Var\left[ \norm{ \sum_{x \in X} \alpha_x G(x)}^2 \right]  \leq \frac{\eps^2}{10} \|v\|^4$
	%\]
	and hence with probability at least $9/10$ we have
	$
	\left| \norm{ \sum_{x \in X} \alpha_x G(x)}^2 - \|v\|^2 \right| \leq \eps \|v\|^2.
	$
\end{lemma}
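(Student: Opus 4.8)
The plan is to reduce the entire weighted sum to a single application of the \TS and then invoke its stated expectation and variance guarantees. The key observation is that every summand uses the \emph{same} randomly chosen $T \in \TS_{s,m,d}$: by definition $G(x) = T(y_x^{(1)} \otimes \cdots \otimes y_x^{(d)})$, and $T$ is a \emph{linear} map $\R^{s^d} \to \R^m$. Linearity lets me pull the weighted sum inside, so that
\[
\sum_{x \in X} \alpha_x G(x) = T\!\left( \sum_{x \in X} \alpha_x \, y_x^{(1)} \otimes \cdots \otimes y_x^{(d)} \right) = T(v).
\]
Hence $\norm{\sum_{x \in X} \alpha_x G(x)}^2 = \inner{T(v)}{T(v)}$, and the problem collapses to controlling the single inner product $\inner{T(u)}{T(u)}$ with $u = v$.

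First I would match parameters by setting $n = s$ and $k = d$ in the \TS notation, so the hypothesis $m = O(d/\eps^2)$ is exactly the $m = O(k/\eps^2)$ required for its guarantee. Applying the stated identity $\E(\inner{T(u)}{T(v)}) = \inner{u}{v}$ with $u = v$ immediately gives $\E[\inner{T(v)}{T(v)}] = \inner{v}{v} = \|v\|^2$, the first claim. Likewise, the stated bound $\Var(\inner{T(u)}{T(v)}) \leq \frac{\eps^2}{10}\norm{u}^2\norm{v}^2$ with $u = v$ yields $\Var[\inner{T(v)}{T(v)}] \leq \frac{\eps^2}{10}\|v\|^4$, the second claim.

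Finally I would apply Chebyshev's inequality to the random variable $\inner{T(v)}{T(v)}$, whose mean is $\|v\|^2$ and whose variance is at most $\frac{\eps^2}{10}\|v\|^4$:
\[
\prob{\,\left|\inner{T(v)}{T(v)} - \|v\|^2\right| \geq \eps \|v\|^2\,} \leq \frac{\frac{\eps^2}{10}\|v\|^4}{\eps^2 \|v\|^4} = \frac{1}{10},
\]
so the deviation bound holds with probability at least $9/10$. The only step demanding care is the first: I must justify that $T$ is genuinely linear so that the single-pair guarantee transfers to the weighted sum, rather than having to control the full double sum $\sum_{x,x'} \alpha_x \alpha_{x'} \inner{T(u_x)}{T(u_{x'})}$ together with its fourth-moment covariances directly. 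This linearity is inherent in the construction of $\TS_{n,m,k}$ (it is built from composed linear sketches, with a single $T$ drawn once and reused across all $x$), so once it is invoked the remaining arguments are routine.
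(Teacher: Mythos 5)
Your proposal is correct and follows exactly the route the paper intends: the paper states this lemma without proof, attributing it to Ahle et al.\ with the remark that the sketch ``inherits'' the \TS concentration properties, and that inheritance is precisely your linearity argument $\sum_{x\in X}\alpha_x G(x) = T(v)$ followed by the single-pair expectation/variance guarantee with $u=v$ and Chebyshev. You also correctly identify the one step that genuinely needs justification, namely that $T$ is a linear map, which holds by construction of the \TS.
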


If $G$ is randomly chosen from $\textsc{GaussianSketchHD}_{m_1,\dots,m_s,s}$, then $G(x) = Sz_x$, where $S$ is a $m \times \frac{d^s-1}{d-1}$ random matrix (recall $m=\sum_{j=1}^s m_j$) so, for the $(m_{i-1}+1)$th row to the $m_i$th row, and the $(\frac{d^{i-1}-1}{d-1} + 1)$th column to the $\frac{d^{i}-1}{d-1}$th column forms a matrix $S_i$ where $T_i(z_x^{(i)}) = S_iz_x^{(i)}$, and the rest of entries are zero.
%Finally, denote $M_X$ is a $n\times \frac{d^s-1}{d-1}$ matrix that $i$-th row as $u_{x^{(i)}}$ for given point set $X = \{x^{(1)},x^{(2)}\dots,x^{(n)}\}\subset \mathbb{R}^d$.
%Again, a version of the next concentration bound appears in \cite{ahle2019oblivious}; it involves rewriting and then truncating an expansion of $\E((A_iS^TSB_j^T)^2)$, then applying the Markov inequality.  

\begin{lemma}[\cite{ahle2019oblivious}]~\label{lem:prob_hd}
	Suppose $A,B$ has $\frac{d^s-1}{d-1}$ columns.
	Denote $A_i$ and $B_i$ be $i$th row of $A$ and $B$ respectively.
	By taking $m_i=O\left(\frac{i}{\eps^2}\right)$, we have
	%\[
	$\prob{\norm{AB^T - AS^TSB^T}_F^2 \leq \eps^2\norm{A}_F^2\norm{B}_F^2 } \geq 1-\delta.$  
	%\]
\end{lemma}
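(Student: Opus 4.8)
The plan is to reduce the statement to the single-sketch variance guarantee of the \TS by exploiting the block-diagonal structure of $S$, and then to pass from a per-entry second-moment bound to the full Frobenius norm via linearity of expectation, finishing with Markov's inequality. First I would fix block notation: partition the columns of $A$ and $B$ into $s$ groups, writing $A = [A^{(1)} \mid \cdots \mid A^{(s)}]$ and $B = [B^{(1)} \mid \cdots \mid B^{(s)}]$, where $A^{(i)}$ and $B^{(i)}$ collect the $d^{i-1}$ columns on which $S_i$ (equivalently $T_i$) acts. Because $S$ is block diagonal with blocks $S_i$ and the $T_i$ are drawn independently, for any two rows written as column vectors $u = A_a^T$ and $v = B_b^T$ in $\R^{(d^s-1)/(d-1)}$ we have $\inner{Su}{Sv} = \sum_{i=1}^s \inner{T_i(u^{(i)})}{T_i(v^{(i)})}$ and $\inner{u}{v} = \sum_{i=1}^s \inner{u^{(i)}}{v^{(i)}}$, where $u^{(i)}, v^{(i)}$ denote the $i$th blocks. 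Thus the $(a,b)$ entry of the error matrix $AB^T - AS^TSB^T$ is exactly $\inner{Su}{Sv} - \inner{u}{v}$.

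Next I would control one such entry. Setting $Z_i = \inner{T_i(u^{(i)})}{T_i(v^{(i)})} - \inner{u^{(i)}}{v^{(i)}}$, the \TS guarantee (with $T_i$ from $\TS_{d,m_i,i-1}$ and $m_i = O(i/\eps^2)$) gives $\E[Z_i] = 0$ and $\E[Z_i^2] = \Var(\inner{T_i(u^{(i)})}{T_i(v^{(i)})}) \leq \frac{\eps^2}{10}\norm{u^{(i)}}^2\norm{v^{(i)}}^2$. Since the $T_i$ are independent and each $Z_i$ is mean zero, the cross terms vanish, so $\E[(\inner{Su}{Sv} - \inner{u}{v})^2] = \sum_{i=1}^s \E[Z_i^2] \leq \frac{\eps^2}{10}\sum_{i=1}^s\norm{u^{(i)}}^2\norm{v^{(i)}}^2$. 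Summing over all row pairs $a,b$ and using linearity of expectation (the same $S$ is reused across entries, but only the first moment of the squared Frobenius error is needed, so the reuse introduces no covariance difficulty) gives $\E[\norm{AB^T - AS^TSB^T}_F^2] \leq \frac{\eps^2}{10}\sum_{i=1}^s \big(\sum_a \norm{u_a^{(i)}}^2\big)\big(\sum_b \norm{v_b^{(i)}}^2\big) = \frac{\eps^2}{10}\sum_{i=1}^s \norm{A^{(i)}}_F^2 \norm{B^{(i)}}_F^2$, where the factorization uses that $\sum_a \norm{u_a^{(i)}}^2$ is just the column-block Frobenius norm $\norm{A^{(i)}}_F^2$.

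The remaining algebraic step is the bound $\sum_{i=1}^s \norm{A^{(i)}}_F^2 \norm{B^{(i)}}_F^2 \leq \big(\sum_i \norm{A^{(i)}}_F^2\big)\big(\sum_i \norm{B^{(i)}}_F^2\big) = \norm{A}_F^2\norm{B}_F^2$, valid since all terms are nonnegative. Combining, $\E[\norm{AB^T - AS^TSB^T}_F^2] \leq \frac{\eps^2}{10}\norm{A}_F^2\norm{B}_F^2$, and Markov's inequality yields $\prob{\norm{AB^T - AS^TSB^T}_F^2 \geq \eps^2\norm{A}_F^2\norm{B}_F^2} \leq \frac{1}{10}$. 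I expect the main obstacle — the only step that is not completely routine — to be upgrading this constant success probability to the claimed $1-\delta$, since plain Markov on the first moment cannot deliver exponentially small failure probability. One route is to rescale, running with $\eps' = \eps\sqrt{10\delta}$ so that $m_i = O(i/(\delta\eps^2))$ absorbs the $1/\delta$ factor; a cleaner route is to take a median over $O(\log(1/\delta))$ independent copies of $S$, each satisfying the constant-probability bound just established. Achieving the sharper $m_i = O(i\log(1/\delta)/\eps^2)$ directly would instead require higher-moment (Hanson--Wright-type) concentration for the \TS inner product, which goes beyond the second-moment information quoted from \cite{ahle2019oblivious}.
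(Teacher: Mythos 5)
Your argument is correct, and it is essentially the canonical derivation of this approximate-matrix-product statement from the second-moment guarantee of the \TS: the paper itself offers no proof of Lemma~\ref{lem:prob_hd} (it is imported wholesale from \cite{ahle2019oblivious}), so your reconstruction via the block decomposition of $S$, independence of the $T_i$ across blocks, linearity of expectation over entries, the bound $\sum_i \norm{A^{(i)}}_F^2\norm{B^{(i)}}_F^2 \leq \norm{A}_F^2\norm{B}_F^2$, and Markov is exactly what one would write. Your closing observation is also on target and worth keeping: with $m_i = O(i/\eps^2)$ carrying no dependence on $\delta$, second-moment information plus Markov can only deliver a fixed constant failure probability (here $1/10$), so the ``$\geq 1-\delta$'' in the statement is an imprecision in how the lemma is transcribed --- either the $O(\cdot)$ silently absorbs a $\log(1/\delta)$ factor via the higher-moment analysis of \cite{ahle2019oblivious}, or $\delta$ should be read as a fixed constant. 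The latter reading is consistent with how the lemma is actually used: Theorem~\ref{thm:main_hd} invokes it only to obtain success probability $9/10$, which your proof already delivers without any of the repair strategies you list.
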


\subsection{Truncation Bounds for GaussianSketch and GaussianSketchHD}
These sketches are effective when it is useful to analyze the effect of sketching a large data set $X$ of size $n$, and we desire to show the cumulative measured across all pairs of elements.  For each sketch we expand these infinite sums, and determine the truncation parameter $s$ so the sum of terms past $s$ have a bounded effect.

%We first consider a weighted set of elements $X \subset \R^d$ with weights $\alpha_x$ for $x \in X$, and we want to show general concentration bounds for these under the \GS; the proof is simple and basically in \cite{ahle2019oblivious}.  
%
%\begin{lemma}[\cite{ahle2019oblivious}]\label{lem:GS-Cheb}
%	Let $G$ be a randomly chosen hash function in $\GS_{m,s}$ with $m = O\left(\frac{d}{\eps^2}\right)$.  
%	Let $v = \sum_{x \in X} \alpha_x y_x^{(1)}\otimes \dots\otimes y_x^{(d)}$, then 
%	%\[
%	$\E\left[ \norm{ \sum_{x \in X} \alpha_x G(x)}^2 \right] = \|v\|^2$
%	and
%	%\;\;\; \text{ and } \;\;\;
%	$\Var\left[ \norm{ \sum_{x \in X} \alpha_x G(x)}^2 \right]  \leq \frac{\eps^2}{10} \|v\|^4$
%	%\]
%	and hence with probability at least $9/10$ we have
%	$
%	\left| \norm{ \sum_{x \in X} \alpha_x G(x)}^2 - \|v\|^2 \right| \leq \eps \|v\|^2.
%	$
%\end{lemma}
%\begin{proof}
%	By Avron \etal~\cite{ANW14}, for a \TS $T$ we have $\E[\langle T(u), T(v)\rangle] = \langle u, v \rangle$.  So the expected value preservation follows by setting $u=v$ and linearity of expectation.  
%	
%	Moreover, they shows $\Var[\langle T(u), T(v)\rangle] \leq \frac{\eps^2}{10} \|u\|^2 \|v\|^2$ for $u,v \in \R^{n^k}$, which with $u=v$ is at most $\frac{\eps^2}{10} \|v\|^4$ in our setting.  
%	
%	Then the concentration bound follows by Chebyshev's inequality.  
%\end{proof}

In our analysis, we will use the following inequality which follows by standard calculus analysis, for any $\eta>0$,
\[
\sum_{j=s}^\infty \frac{\eta^{j}}{j!}
\leq 
\frac{\left(\sup_{y\in[-\eta,\eta]}\exp(y)\right)\eta^s}{s!}
\leq
\frac{\exp(\eta)\eta^s}{s!} \qedhere\numberthis\label{eqn:truncate}
\]

The following expression also arises in our analysis.  

\begin{lemma} \label{lem:tail-GS}
	For $\xi,a,b > 0$, setting 
	$s = \Theta\left(\frac{\log \frac{\xi \cdot a}{\alpha}}{\log \left( \frac{1}{b} \log \frac{\xi \cdot a}{\alpha} \right) }\right)$ 
	then the we have
	$\xi \cdot a \left(\frac{b}{s}\right)^s \leq \alpha$.
\end{lemma}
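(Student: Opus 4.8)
The plan is to take logarithms and reduce the claim to a clean statement about $s \log(s/b)$. Write $M := \log\frac{\xi a}{\alpha}$, which we treat as positive and large (the regime of all our applications, where $\alpha$ is tiny). Dividing through and taking reciprocals, the target inequality $\xi a (b/s)^s \le \alpha$ is equivalent to $(s/b)^s \ge \xi a/\alpha$, and taking logs this is exactly
\[
s \log \frac{s}{b} \ge M .
\]
So it suffices to verify that the prescribed $s = \Theta\left(\frac{M}{\log(M/b)}\right)$ makes the left-hand side at least $M$. Note that $\frac{1}{b}\log\frac{\xi a}{\alpha} = M/b$, so the stated $\Theta$-expression is precisely $\Theta(M/\log(M/b))$.

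Next I would substitute $s = c\,M/\log(M/b)$ for a constant $c$ fixed at the end. The only quantity left to control is $\log(s/b)$, which expands as
\[
\log \frac{s}{b} = \log \frac{cM}{b} - \log\log\frac{M}{b} = \log\frac{M}{b} + \log c - \log\log\frac{M}{b} .
\]
The key observation is that for $M/b$ large the nested term $\log\log(M/b)$ is only a lower-order correction to $\log(M/b)$, so $\log(s/b) \ge \tfrac12 \log(M/b)$. Multiplying by $s = cM/\log(M/b)$ then yields $s\log(s/b) \ge \tfrac{c}{2} M$, and choosing $c \ge 2$ closes the inequality.

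The only delicate point, and the one place I would take care, is the regime where the arguments of the logarithms are not large, since then the $-\log\log(M/b)$ correction is not obviously negligible. I would handle this by exploiting that the statement is asserted only up to the $\Theta(\cdot)$ constant: first dispose of the bounded small-$M$ range by a direct check (absorbing it into the constant), and in the main range use the standard calculus fact that $\log\log t \le \tfrac12 \log t$ once $t$ exceeds an absolute threshold. Everything else is routine, and no idea beyond the log-linearization above is required.
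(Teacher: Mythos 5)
Your proposal is correct and follows essentially the same route as the paper: both substitute $s = \Theta\bigl(b\gamma/\log\gamma\bigr)$ with $\gamma = \frac{1}{b}\log\frac{\xi a}{\alpha}$, reduce the claim to $s\log\frac{s}{b} \ge \log\frac{\xi a}{\alpha}$, and absorb the $\log\log$ correction into the $\Theta$-constant. Your explicit handling of the small-$\gamma$ regime is a minor point of extra care that the paper leaves implicit, but the argument is the same.
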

\begin{proof}
	
	%	By setting $\frac{s}{b} = \Theta\left( \frac{\frac{1}{b} \log \frac{\xi a}{\alpha} }{\log \left(\frac{1}{b} \log \frac{\xi a}{\alpha} \right)} \right)$ and multiplying through by the demoninator on the RHS, we have 
	%\[
	%\frac{1}{b} \log \frac{\xi a}{\alpha}
	%\leq 
	%\frac{s}{b} \log (\frac{1}{b} \log \frac{\xi a}{\alpha})
	%\leq
	%\frac{s}{b} \log \frac{s}{b}.
	%\]
	By setting $\frac{s}{b} = C\frac{\gamma}{\log \gamma}$ for some large constant $C$ where $\gamma = \frac{1}{b} \log \frac{\xi a}{\alpha}$, we have
	\[
	\frac{s}{b} \log \frac{s}{b} 
	= 
	C\frac{\gamma}{\log \gamma} \log \left(C\frac{\gamma}{\log \gamma}\right) 
	= 
	\gamma\cdot C\left( 1+ \frac{\log C}{\log \gamma} - \frac{\log \log \gamma}{\log \gamma}\right) 
	\geq 
	\gamma
	=
	\frac{1}{b} \log \frac{\xi a}{\alpha}.
	\]
	Now, if we rearrange the inequality then $\xi \cdot a \left(\frac{b}{s}\right)^s \leq \alpha$.
\end{proof}

Consider a point set $X = \{x^{(1)}, x^{(2)},\dots,x^{(n)}\}\subset\mathbb{R}^d$, denote $K_X$ as the $n\times n$ matrix with $(K_X)_{i,j} = \exp(-\norm{x^{(i)}-x^{(j)}}^2)$.
First truncate $K_X$ using Lemma \ref{lem:gaussian_inner} to obtain the $n\times n$ matrix $K^{\mathsf{GS}}_{X,s}$ with
\begin{align*}
	&
	(K^{\mathsf{GS}}_{X,s})_{i,j} \\
	& = 
	\sum_{j_1=0}^{s-1}\cdots\sum_{j_d=0}^{s-1}\left(\exp(-\norm{x^{(i)}}^2)\left(\prod_{a=1}^{d}\sqrt{\frac{2^{j_a}}{j_a!}}(x^{(i)}_{a})^{j_a} \right)\right)\\
	& \hspace{2in}\cdot\left(\exp(-\norm{x^{(j)}}^2)\left(\prod_{a=1}^{d}\sqrt{\frac{2^{j_a}}{j_a!}}(p^{(j)}_{a})^{j_a} \right)\right) 
\end{align*}
%Namely, $K^{\mathsf{GS}}_{X,s}$ is the truncated version of $K_X$ based on Lemma \ref{lem:gaussian_inner}.

\begin{lemma}~\label{lem:tail}
	Suppose $X \subset \R^d$ so for all $x^{(i)} \in X$ has $\norm{x^{(i)}}_\infty\leq L$ for some $L>0$.
	Given a vector $w\in\mathbb{R}^n$ with $\left(\sum_{i=1}^n \abs{w_i}\right)^2 \leq \xi$, we have 
	\[
	w^T(K_X - K^{\mathsf{GS}}_{X,s})w
	\leq 
	\left(\sum_{i=1}^n \abs{w_i}\right)^2 d\exp(2dL^2)\left(\frac{2eL^2}{s}\right)^s
	\leq
	\alpha,
	\]
	where the last $\leq \alpha$ inequality follows from setting 
	$s = s_{L,d,\alpha} = \Theta\left(\frac{\log \frac{\xi \cdot d \exp(2dL^2)}{\alpha}}{\log \left( \frac{1}{2 e L^2} \log \frac{\xi \cdot d \exp(2d L^2)}{\alpha} \right) }\right)$.
\end{lemma}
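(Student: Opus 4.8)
The plan is to first rewrite the quadratic form so that the matrix difference becomes manifestly a sum of squares, then bound each square using the domain constraint, and finally reduce the tail of the $d$-fold exponential series to a single-variable tail that Lemma~\ref{lem:tail-GS} can absorb.

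First I would use Lemma~\ref{lem:gaussian_inner} to read off $(K_X)_{i,j}$ as the inner product summed over all multi-indices $(j_1,\dots,j_d)$, while $(K^{\mathsf{GS}}_{X,s})_{i,j}$ is the same inner product restricted to multi-indices with every $j_a \leq s-1$. Hence $(K_X - K^{\mathsf{GS}}_{X,s})_{i,j}$ is exactly the portion carried by multi-indices having at least one coordinate $\geq s$. Substituting into $w^T(K_X - K^{\mathsf{GS}}_{X,s})w$ and exchanging the order of summation, the quadratic form collapses to
\[
\sum_{(j_1,\dots,j_d)\,:\,\exists a,\, j_a\geq s}\left(\sum_{i=1}^n w_i\exp(-\norm{x^{(i)}}^2)\prod_{a=1}^d\sqrt{\tfrac{2^{j_a}}{j_a!}}(x^{(i)}_a)^{j_a}\right)^2,
\]
which is a sum of squares and therefore non-negative; this is what makes the stated inequality one-sided.

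Next I would bound each inner sum by the triangle inequality, using $\exp(-\norm{x^{(i)}}^2)\leq 1$ and $\abs{x^{(i)}_a}\leq L$ to replace every $\abs{x^{(i)}_a}^{j_a}$ by $L^{j_a}$. Since the resulting product $\prod_a\sqrt{2^{j_a}/j_a!}\,L^{j_a}$ no longer depends on $i$, it factors out of the square, leaving the bound $(\sum_i\abs{w_i})^2$ times $\sum_{\exists a,\,j_a\geq s}\prod_{a=1}^d\frac{(2L^2)^{j_a}}{j_a!}$. To control this last tail I would apply a union bound over which coordinate is the one exceeding $s$: for each fixed index $b$, summing over $j_b\geq s$ with the remaining $d-1$ coordinates free factors into $\big(\sum_{j\geq s}\frac{(2L^2)^j}{j!}\big)\exp(2(d-1)L^2)$, and inequality~\eqref{eqn:truncate} with $\eta=2L^2$ bounds the surviving single-variable tail by $\frac{\exp(2L^2)(2L^2)^s}{s!}$. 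Summing over the $d$ choices of $b$ and applying the Stirling lower bound $s!\geq (s/e)^s$ yields $d\exp(2dL^2)(2eL^2/s)^s$, giving the first displayed inequality.

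Finally, using the hypothesis $(\sum_i\abs{w_i})^2\leq\xi$, the whole expression is at most $\xi\, d\exp(2dL^2)(2eL^2/s)^s$, and I would invoke Lemma~\ref{lem:tail-GS} with $a=d\exp(2dL^2)$ and $b=2eL^2$ to conclude that the stated choice $s=s_{L,d,\alpha}$ forces this below $\alpha$. I expect the main obstacle to be the union-bound/factorization step: one must cleanly isolate a single ``heavy'' coordinate from the remaining free coordinates so that the multivariate tail reduces to the univariate series handled by~\eqref{eqn:truncate}, taking care that the union bound over the $d$ coordinates produces exactly the linear prefactor $d$ and does not overcount multi-indices with several large coordinates.
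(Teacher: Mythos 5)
Your proposal is correct and follows essentially the same route as the paper's proof: expand via Lemma~\ref{lem:gaussian_inner}, collapse the quadratic form to a sum of squares over the tail multi-indices, bound with $\norm{x^{(i)}}_\infty\leq L$, union-bound over the heavy coordinate $b$ so the multivariate tail factors into $\exp(2(d-1)L^2)$ times a univariate tail handled by~(\ref{eqn:truncate}), apply $s!\geq(s/e)^s$, and finish with Lemma~\ref{lem:tail-GS} using $a=d\exp(2dL^2)$, $b=2eL^2$. The only cosmetic difference is the order of the $L$-bound and the union bound, and note that the union bound is allowed to overcount multi-indices with several large coordinates since only an upper bound is needed.
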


%\begin{proof}
%	See Appendix.  Using Lemma \ref{lem:gaussian_inner} we expand the LHS, using only when some dimension $j_b \geq s$.  Using $\|x^{(i)}\|_\infty \leq L$, this can be bounded by $(\sum_{i=1}^n |w_i|)^2$ times a term independent of $X$.  This latter term simplifies to the stated bound through a few technical inequalities.  
%\end{proof}

\begin{proof}
	From Lemma \ref{lem:gaussian_inner}, we have
	\begin{align*}
		&
		(K_X - K^{\mathsf{GS}}_{X,s})_{i,j}\\
		& =  \hspace{-3mm} 
		\sum_{\substack{j_1,\dots,j_d  \\ \text{one of $j_b\geq s$}}}\left(\exp(-\norm{x^{(i)}}^2)\left(\prod_{a=1}^{d}\sqrt{\frac{2^{j_a}}{j_a!}}(x^{(i)}_{a})^{j_a} \right)\right) \\
		& \hspace{2in}\cdot\left(\exp(-\norm{x^{(j)}}^2)\left(\prod_{a=1}^{d}\sqrt{\frac{2^{j_a}}{j_a!}}(x^{(j)}_{a})^{j_a} \right)\right)
	\end{align*}
	Then we can analyze these in aggregate with respect to a test vector $z$.  The first line uses the fact that a matrix $A$ (for instance with $A = K_X - K^{\textsf{GS}}_{X,s}$) written as \\$\sum_{j} (\sum_{x_i \in X} \psi_j(x_i))(\sum_{x'_i\in X} \psi_j(x'_i))$  can be simplified $w^T A w = \sum_j (\sum_{x_i \in X} w_i \psi_j(x_i))^2$.   
	\begin{align*}
		&
		w^T(K_X - K^{\mathsf{GS}}_{X,s})w \\
		& =
		\sum_{\substack{j_1,\dots,j_d  \\ \text{one of $j_b\geq s$}}}\left(\sum_{i=1}^n w_i\exp(-\norm{x^{(i)}}^2)\left(\prod_{a=1}^{d}\sqrt{\frac{2^{j_a}}{j_a!}}(x^{(i)}_{a})^{j_a} \right)\right)^2 \\
		& \leq
		\sum_{b=1}^d\sum_{\substack{j_1,\dots,j_d  \\ j_b\geq s}}\left(\sum_{i=1}^n w_i\exp(-\norm{x^{(i)}}^2)\left(\prod_{a=1}^{d}\sqrt{\frac{2^{j_a}}{j_a!}}(x^{(i)}_{a})^{j_a} \right)\right)^2 & \text{by union bound} \\
		& \leq
		\sum_{b=1}^d\sum_{\substack{j_1,\dots,j_d  \\ j_b\geq s}}\left(\sum_{i=1}^n \abs{w_i}\left(\prod_{a=1}^{d}\sqrt{\frac{2^{j_a}}{j_a!}}L^{j_a} \right)\right)^2 & \text{assuming $\norm{x^{(i)}}_\infty\leq L$} \\
		& \leq
		\left(\sum_{i=1}^n \abs{w_i}\right)^2 \left(\sum_{b=1}^d\sum_{\substack{j_1,\dots,j_d  \\ j_b\geq s}} \left(\prod_{a=1}^{d}\frac{(2L^2)^{j_a}}{j_a!} \right)\right)
	\end{align*}
	The term $\sum_{b=1}^d\sum_{\substack{j_1,\dots,j_d  \\ j_b\geq s}} \left(\prod_{a=1}^{d}\frac{(2L^2)^{j_a}}{j_a!} \right)$ can be expressed as the follows.
	\begin{align*}
	&
	\sum_{b=1}^d\sum_{\substack{j_1,\dots,j_d  \\ j_b\geq s}} \left(\prod_{a=1}^{d}\frac{(2L^2)^{j_a}}{j_a!} \right) \\
	& =
	\sum_{b=1}^d \left(\sum_{j_1=0}^\infty\frac{(2L^2)^{j_1}}{j_1!} \right)\cdots\left(\sum_{j_b=s}^\infty\frac{(2L^2)^{j_b}}{j_b!} \right)\cdots\left(\sum_{j_d=0}^\infty\frac{(2L^2)^{j_d}}{j_d!} \right) \\
	& =
	\sum_{b=1}^d \left(\prod_{\substack{a = 1 \\ a \neq b}}^{d}\exp(2L^2)\right)\left(\sum_{j_b=s}^\infty\frac{(2L^2)^{j_b}}{j_b!} \right) \\
	& \leq
	\sum_{b=1}^d \left(\exp((d-1)2L^2)\right)\frac{\exp(2L^2)(2L^2)^s}{s!} & \text{by (\ref{eqn:truncate})} \\
	& =
	\frac{d\exp(2dL^2)(2L^2)^s}{s!}\\
	& \leq
	d\exp(2dL^2)\left(\frac{2eL^2}{s}\right)^s &\text{by the fact $s!\geq \left(\frac{s}{e}\right)^s$}
	\end{align*}
	
	Thus, we have
	\begin{align*}
	w^T(K_X - K^{\mathsf{GS}}_{X,s})w 
	& \leq 
	\left(\sum_{i=1}^n \abs{w_i}\right)^2 \left(\sum_{b=1}^d\sum_{\substack{j_1,\dots,j_d  \\ j_b\geq s}} \left(\prod_{a=1}^{d}\frac{(2L^2)^{j_a}}{j_a!} \right)\right) \\
	& \leq 
	\left(\sum_{i=1}^n \abs{w_i}\right)^2d\exp(2dL^2)\left(\frac{2eL^2}{s}\right)^s \\
	& \leq 
	\alpha
	\end{align*}
	where the last inequality follows Lemma \ref{lem:tail-GS} using $\xi = \left(\sum_{i=1}^n \abs{w_i}\right)^2$,  $a = d \exp(2dL^2)$ and $b = 2eL^2$.  	
\end{proof}

Now truncate $K_X$ based on Lemma \ref{lem:gaussian_inner_hd} to obtain $K^{\textsf{HD}}_{X,s}$  with
\[
(K^{\textsf{HD}}_{X,s})_{i,j} = \sum_{a=0}^{s-1} \inner{ \exp(-\norm{x^{(i)}}^2)\sqrt{\frac{2^a}{a!}}(x^{(i)})^{\otimes a}}{ \exp(-\norm{x^{(j)}}^2)\sqrt{\frac{2^a}{a!}}(x^{(j)})^{\otimes a}}
\]

\begin{lemma}\label{lem:tail_hd}
	Define $\Lambda_R^d = \{x \in \R^d \mid \|x\|_2 \leq R\}$.  
	For a point set $X \subset \Lambda_R^d$, and a vector $w\in\mathbb{R}^n$ with $(\sum_{i=1}^n \abs{w_i})^2\leq \xi$, we have 
	\[
	w^T(K_X - K^{\textsf{HD}}_{X,s})w \leq \left(\sum_{i=1}^n \abs{w_i}\right)^2 \exp(2R^2)\left(\frac{2eR^2}{s}\right)^s \leq \alpha
	\]
	where the last $\leq \alpha$ inequality follows from setting $s = s_{R,\alpha} = \Theta\left(\frac{\log \frac{\xi \cdot \exp(2R^2)}{\alpha}}{\log \left( \frac{1}{2 e R^2} \log \frac{\xi \cdot \exp(2R^2)}{\alpha} \right) }\right)$.
\end{lemma}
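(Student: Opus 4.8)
The argument will mirror that of Lemma~\ref{lem:tail}, but it is in fact cleaner: the expansion of Lemma~\ref{lem:gaussian_inner_hd} is indexed by a single order $a$ rather than by a $d$-tuple, so no union bound over coordinates is needed and the factor of $d$ present in Lemma~\ref{lem:tail} simply does not appear. First I would subtract the two truncations. By Lemma~\ref{lem:gaussian_inner_hd} we have $(K_X)_{i,j} = \sum_{a=0}^\infty \inner{z^{(a)}_{x^{(i)}}}{z^{(a)}_{x^{(j)}}}$, where I write $z^{(a)}_x = \exp(-\norm{x}^2)\sqrt{2^a/a!}\, x^{\otimes a}$, while $K^{\textsf{HD}}_{X,s}$ keeps only the terms $a = 0,\dots,s-1$. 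Hence the difference is exactly the tail
\[
(K_X - K^{\textsf{HD}}_{X,s})_{i,j} = \sum_{a=s}^\infty \inner{z^{(a)}_{x^{(i)}}}{z^{(a)}_{x^{(j)}}}.
\]

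Next I would pass to the quadratic form. For each fixed $a$, the matrix with entries $\inner{z^{(a)}_{x^{(i)}}}{z^{(a)}_{x^{(j)}}}$ is a Gram matrix and hence positive semidefinite; applying the same outer-product identity used in Lemma~\ref{lem:tail} coordinate-by-coordinate on the vectors $z^{(a)}_{x^{(i)}}$ gives $\sum_{i,j} w_i w_j \inner{z^{(a)}_{x^{(i)}}}{z^{(a)}_{x^{(j)}}} = \norm{\sum_i w_i z^{(a)}_{x^{(i)}}}^2$. Summing over $a \geq s$ then yields
\[
w^T(K_X - K^{\textsf{HD}}_{X,s})w = \sum_{a=s}^\infty \norm{\sum_{i=1}^n w_i z^{(a)}_{x^{(i)}}}^2 .
\]
I would bound each term by the triangle inequality, $\norm{\sum_i w_i z^{(a)}_{x^{(i)}}} \leq \sum_i \abs{w_i}\,\norm{z^{(a)}_{x^{(i)}}}$, combined with the pointwise estimate $\norm{z^{(a)}_x} = \exp(-\norm{x}^2)\sqrt{2^a/a!}\,\norm{x}_2^{\,a} \leq \sqrt{(2R^2)^a/a!}$, which uses $\exp(-\norm{x}^2)\leq 1$ and the hypothesis $\norm{x}_2 \leq R$. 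This leaves the single-index tail $\left(\sum_i \abs{w_i}\right)^2 \sum_{a=s}^\infty (2R^2)^a/a!$.

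Finally I would close out the tail exactly as in Lemma~\ref{lem:tail}: apply~(\ref{eqn:truncate}) with $\eta = 2R^2$ to get $\sum_{a=s}^\infty (2R^2)^a/a! \leq \exp(2R^2)(2R^2)^s/s!$, then use $s! \geq (s/e)^s$ to reach the claimed bound $\exp(2R^2)\left(\frac{2eR^2}{s}\right)^s$, and conclude the $\leq \alpha$ inequality by invoking Lemma~\ref{lem:tail-GS} with $\xi = \left(\sum_i \abs{w_i}\right)^2$, $a = \exp(2R^2)$, and $b = 2eR^2$. I expect no serious obstacle here; the only genuinely conceptual step is recognizing the per-order Gram/PSD structure so the quadratic form collapses into a sum of squared norms, and the one place to be careful is that the order-$a$ tail now involves $\norm{x}_2^{\,a} \le R^a$ rather than a product over coordinate powers, which is precisely why the estimate depends on the $L_2$ radius $R$ and avoids the extra factor of $d$.
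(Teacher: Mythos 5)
Your proposal is correct and follows essentially the same route as the paper's proof: expand the tail via Lemma~\ref{lem:gaussian_inner_hd}, collapse the quadratic form into $\sum_{a\ge s}\norm{\sum_i w_i z^{(a)}_{x^{(i)}}}^2$, bound each term using $\norm{x^{\otimes a}}=\norm{x}^a\le R^a$, and finish with~(\ref{eqn:truncate}), $s!\ge (s/e)^s$, and Lemma~\ref{lem:tail-GS} with the same parameter choices. Your observation that the single-index expansion removes both the union bound and the factor of $d$ from Lemma~\ref{lem:tail} is exactly the point of this variant.
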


%\begin{proof}
%	See Appendix.  This proof is conceptually and technically similar to that for Lemma \ref{lem:tail}, but starts with an expansion using Lemma \ref{lem:gaussian_inner_hd}, and now requires $\|x\|_2 \leq R$ to separate a factor $(\sum_{i=1}^n |w_i|)^2$.   
%\end{proof}

\begin{proof} 
	
	From Lemma \ref{lem:gaussian_inner_hd}, we have
	\begin{align*}
	(K_X - K^{\textsf{HD}}_{X,s})_{i,j}
	& =
	\sum_{a=s}^{\infty} \inner{ \exp(-\norm{p^{(i)}}^2)\sqrt{\frac{2^a}{a!}}(p^{(i)})^{\otimes a}}{ \exp(-\norm{p^{(j)}}^2)\sqrt{\frac{2^a}{a!}}(p^{(j)})^{\otimes a}}
	\end{align*}
	Then we can analyze these in aggregate with respect to a test vector $z$.
	The first line uses the fact that a matrix $A$ (for instance with $A = K_X - K^{\textsf{HD}}_{X,s}$) written as \\$\sum_{j} (\sum_{x_i \in X} \psi_j(x_i))(\sum_{x'_i\in X} \psi_j(x'_i))$  can be simplified $w^T A w = \sum_j (\sum_{x_i \in X} w_i \psi_j(x_i))^2$.  
	
	\begin{align*}
	&
	w^T(K_X - K^{\textsf{HD}}_{X,s})w \\
	& =
	\sum_{a=s}^{\infty} \norm{ \sum_{i=1}^n w_i\exp(-\norm{x^{(i)}}^2)\sqrt{\frac{2^a}{a!}}(x^{(i)})^{\otimes a}}^2\\
	& \leq
	\sum_{a=s}^{\infty}  \left(\sum_{i=1}^n \abs{w_i}\norm{\exp(-\norm{x^{(i)}}^2)\sqrt{\frac{2^a}{a!}}(x^{(i)})^{\otimes a}}\right)^2 \\
	& \leq
	\sum_{a=s}^{\infty}  \left(\sum_{i=1}^n \abs{w_i}\sqrt{\frac{2^a}{a!}}R^{ a}\right)^2 & \text{assuming $\norm{x^{(i)}}\leq R$} \\
	& =
	\left(\sum_{i=1}^n \abs{w_i}\right)^2 \left( \sum_{a=s}^\infty \frac{(2R^2)^a}{a!} \right) \\
	& \leq
	\left(\sum_{i=1}^n \abs{w_i}\right)^2 \frac{\exp(2R^2)(2R^2)^s}{s!} &\text{by (\ref{eqn:truncate})} \\
	& \leq
	\left(\sum_{i=1}^n \abs{w_i}\right)^2 \exp(2R^2)\left(\frac{2eR^2}{s}\right)^s &\text{by the fact $s!\geq \left(\frac{s}{e}\right)^s$} \\
	& \leq
	\alpha
	\end{align*}
	where the last inequality follows Lemma \ref{lem:tail-GS} using $\xi = \left(\sum_{i=1}^n \abs{w_i}\right)^2$,  $a = \exp(2R^2)$ and $b = 2eR^2$.  
\end{proof}

\section{Application to the Gaussian Kernel Distance}
\label{sec:DK}
Let $K:\R^d\times\R^d \rightarrow \R$ be Gaussian kernel.
Namely, for any $x,y\in\R^d$, $K(x,y)=\exp(-\norm{x-y}^2)$.
Given two point sets $P,Q\subset \R^d$, one can define a similarity function
$
\kappa(P,Q)=\frac{1}{\abs{P}}\frac{1}{\abs{Q}}\sum_{x\in P}\sum_{y\in Q} K(x,y)
$
and a squared kernel distance
\[
\DK(P,Q)=\kappa(P,P)-2\kappa(P,Q)+\kappa(Q,Q).  
\]

We make the important observation that the above formulation is equivalent to the following form which will be much simpler to fit within our framework:
\[
\DK(P,Q)=\sum_{x \in P\cup Q}\sum_{y\in P\cup Q} \beta_x\beta_y \exp(-\norm{x-y}^2)
\]
where $\beta_x$ is $\frac{1}{\abs{P}}$ if $x\in P$ and $-\frac{1}{\abs{Q}}$ if $x\in Q$.

We now express $\DK(P,Q)$ as the infinite sum using Lemma \ref{lem:gaussian_inner}.  
\begin{align*}
	&
	\DK(P,Q) \\
	& =
	\sum_{x \in P\cup Q}\sum_{y\in P\cup Q} \beta_x\beta_y \exp(-\norm{x-y}^2) \\
	& =
	\sum_{x \in P\cup Q}\sum_{y\in P\cup Q} \beta_x\beta_y \sum_{j_1=0}^\infty\cdots\sum_{j_d=0}^\infty\left(\exp(-\norm{x}^2)\left(\prod_{i=1}^{d}\sqrt{\frac{2^{j_i}}{j_i!}}x_{i}^{j_i} \right)\right) \\
	& \hspace{2.5in}\cdot\left(\exp(-\norm{y}^2)\left(\prod_{i=1}^{d}\sqrt{\frac{2^{j_i}}{j_i!}}y_{i}^{j_i} \right)\right) \\
	& =
	\sum_{j_1=0}^\infty\cdots\sum_{j_d=0}^\infty \left( \sum_{x \in P\cup Q} \beta_x\exp(-\norm{x}^2)\left(\prod_{i=1}^{d}\sqrt{\frac{2^{j_i}}{j_i!}}x_{i}^{j_i} \right) \right)^2	
	\\ & = 
	\norm{\sum_{x \in P \cup Q} \beta_x \bar y_x^{(1)} \otimes\cdots\otimes \bar y^{(d)}_x}^2,
\end{align*}
where each $\bar y_x^{(j)}$ is an infinite dimension vector with $i$th coordinate $\exp(-x_j^2) \sqrt{\frac{2^{i-1}}{(i-1)!}} x_j^{i-1}$.

\begin{theorem}\label{thm:main}
	For any $\eps,R,\alpha>0$, let $G$ be randomly chosen from $\GS_{m,s}$ with $m = O\left(\frac{d}{\eps^2}\right)$ and $s = \Theta\left( \frac{\log \frac{4d\exp(2dL^2)}{\alpha}}{\log \left(\frac{1}{2eL^2}\log \frac{4d\exp(2dL^2)}{\alpha}\right)} \right)$.  
	Let $\Omega^d_L = \{x \in \R^d \mid \|x\|_\infty \leq L\}$.  
	Define a mapping function $F$ from any $X \subset \Omega^d_L$ so $F(X) = \sum_{x \in X} G(x)$, which is a vector in $\R^m$.  
	Then for any $P,Q \subset \Omega_L^d$ with probability at least $9/10$
	\[
	\left| \| F(P) - F(Q)\|^2 - \DK(P,Q) \right| \leq \eps \DK(P,Q) + \alpha.
	\]
	The mapping $G : \R^d \to \R^m$ can be computed in $O\left( \frac{d^2}{\eps^2}\log\frac{d}{\eps} + ds \right)$ time.  
\end{theorem}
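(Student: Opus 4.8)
The plan is to interpose a truncated-but-unsketched quantity between the sketch output and the true kernel distance, and to bound each gap separately. Writing $X = P \cup Q$ and letting $\beta_x$ be the signed weights defined just before the theorem (so that, up to the normalization in the definition of $F$, we have $F(P) - F(Q) = \sum_{x \in X} \beta_x G(x)$), I would work with three quantities: the true value $\DK(P,Q) = \|v\|^2$, where $v = \sum_{x\in X}\beta_x \bar y_x^{(1)}\otimes\cdots\otimes\bar y_x^{(d)}$ is the infinite tensor appearing in the displayed expansion preceding the theorem; the truncated value $\|\tilde v\|^2 = w^T K^{\mathsf{GS}}_{X,s}\, w$, where $\tilde v = \sum_{x\in X}\beta_x y_x^{(1)}\otimes\cdots\otimes y_x^{(d)}$ keeps only $s$ terms per coordinate and $w = (\beta_x)_{x \in X}$; and the sketched value $\|F(P)-F(Q)\|^2$. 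The key structural observation is that each $T$ drawn from $\TS_{s,m,d}$ is linear, so $\sum_{x\in X}\beta_x G(x) = T(\tilde v)$ and therefore $\|F(P)-F(Q)\|^2 = \|T(\tilde v)\|^2$ is exactly the sketch applied to the truncated tensor. The triangle inequality then gives
\[
\left| \|F(P)-F(Q)\|^2 - \DK(P,Q)\right| \le \underbrace{\left|\|T(\tilde v)\|^2 - \|\tilde v\|^2\right|}_{\text{sketching error}} + \underbrace{\left| \|\tilde v\|^2 - \|v\|^2\right|}_{\text{truncation error}}.
\]

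For the truncation error I would apply Lemma \ref{lem:tail} directly with the test vector $w$. Since $\|\tilde v\|^2 = w^T K^{\mathsf{GS}}_{X,s}\, w$ and $\|v\|^2 = \DK(P,Q) = w^T K_X\, w$, the gap equals $w^T(K_X - K^{\mathsf{GS}}_{X,s})w$, which is a sum of squares (as exhibited in the proof of Lemma \ref{lem:tail}); hence it is nonnegative and $\|\tilde v\|^2 \le \|v\|^2$. The total weight is $\sum_{x}|\beta_x| = \frac{1}{|P|}\cdot|P| + \frac{1}{|Q|}\cdot|Q| = 2$, so $\xi = \big(\sum_x |\beta_x|\big)^2 = 4$, which is precisely the constant appearing inside the stated choice of $s$; the domain hypothesis $\|x\|_\infty \le L$ of Lemma \ref{lem:tail} is the containment $X \subset \Omega_L^d$. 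Thus Lemma \ref{lem:tail} certifies that the truncation error is at most $\alpha$.

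For the sketching error, Lemma \ref{lem:GS-Cheb} applied to the weighted set $X$ with weights $\beta_x$ and $m = O(d/\eps^2)$ yields, with probability at least $9/10$, the bound $\left|\|T(\tilde v)\|^2 - \|\tilde v\|^2\right| \le \eps \|\tilde v\|^2$. The one-sidedness of truncation now pays off: because $\|\tilde v\|^2 \le \|v\|^2 = \DK(P,Q)$, this relative error against $\|\tilde v\|^2$ is dominated by the target relative error $\eps\,\DK(P,Q)$. Summing the two bounds gives $\eps\,\DK(P,Q) + \alpha$, and since the truncation step is deterministic, the only failure probability is the $1/10$ coming from the concentration lemma.

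For the runtime, $G(x) = T(y_x^{(1)}\otimes\cdots\otimes y_x^{(d)})$ is computed by first forming the $d$ vectors $y_x^{(j)} \in \R^s$ in $O(ds)$ time and then invoking $T \in \TS_{s,m,d}$, whose stated cost $O(km\log m + kn)$ with $k = d$, $n = s$, and $m = O(d/\eps^2)$ is $O\!\left(\frac{d^2}{\eps^2}\log\frac{d}{\eps} + ds\right)$. I expect no serious obstacle: the theorem is essentially a bookkeeping combination of Lemma \ref{lem:GS-Cheb} and Lemma \ref{lem:tail}, and the only point demanding genuine care is verifying that the sketching error, which the concentration bound states relative to the truncated norm $\|\tilde v\|^2$, can be promoted to a bound relative to the untruncated $\DK(P,Q)$; this is exactly what the inequality $\|\tilde v\|^2 \le \DK(P,Q)$ supplies.
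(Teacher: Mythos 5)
Your proposal is correct and follows essentially the same route as the paper: bound the sketching error via Lemma~\ref{lem:GS-Cheb} with $m = O(d/\eps^2)$ and the truncation error via Lemma~\ref{lem:tail} with $\xi = (\sum_x|\beta_x|)^2 \le 4$, then combine. If anything, your version is slightly more careful than the paper's, which conflates the infinite tensor $v$ with its truncation in its notation; your explicit observation that the truncation gap is a sum of squares, so $\|\tilde v\|^2 \le \DK(P,Q)$ and the relative sketching error can be promoted to be relative to $\DK(P,Q)$, is exactly the point the paper's proof relies on implicitly.
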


\begin{proof}	
	To analyze the \GS, we need to account for error from two sources: from the \TS (using Lemma \ref{lem:GS-Cheb}) and parameter $m$, and from the truncation of the Taylor expansion at $s$ (using Lemma \ref{lem:tail}).  
	In this case we analyze the following infinite expansion
	\[
	\DK(P,Q)
	= 
	\norm{\sum_{x \in P \cup Q} \beta_x \bar y_x^{(1)} \otimes\cdots\otimes \bar y^{(d)}_x}^2,
	\]
	where each $\bar y_x^{(j)}$ is an infinite dimension vector with $i$th coordinate $\exp(-x_j^2) \sqrt{\frac{2^{i-1}}{(i-1)!}} x_j^{i-1}$.  
	
	Let $v = \sum_{x \in P \cup Q} \beta_x \bar y_x^{(1)} \otimes\cdots\otimes \bar y^{(d)}_x$. 
	Then by Lemma \ref{lem:GS-Cheb} by setting $m = O(d/\eps^2)$ we have with probability at least $9/10$ that
	\[
	\left| \norm{\sum_{x \in P \cup Q} \beta_x G(x)}^2 - \|v\|^2 \right| \leq \eps \|v\|^2.  
	\]
	
	Next note that $(\sum_{x \in P \cup Q} |\beta_x|)^2 \leq 4 = \xi$.  
	So by Lemma \ref{lem:tail} the truncation by only $s$ terms can be accounted for as
	\[
	\DK(P,Q) - \|v\|^2 = \beta^T \left(K_{P \cup Q} - K^{\mathsf{GS}}_{P \cup Q,s} \right) \beta \leq 4 d \exp(2d L^2) \left(\frac{2 e L^2}{s}\right)^2 \leq \alpha,
	\]
	where $K_{P \cup Q}$ and  $K^{\mathsf{GS}}_{P \cup Q,s}$ are defined as in Lemma \ref{lem:tail} with $X = P \cup Q$.

	Combining these together we have 
	\[
	(1-\eps)\left(\DK(P,Q) - \alpha\right) \leq (1-\eps)\|v\|^2 \leq | F(P) - F(Q) | \leq (1+\eps) \|v\|^2  \leq (1+\eps) \DK(P,Q).
	\]
	and hence as desired
	\[
	\left| \| F(P) - F(Q)\|^2 - \DK(P,Q) \right| \leq \eps \DK(P,Q) + \alpha. 
	\]
	Recall that the running time of $G$ for mapping a point is 
	\[
	O(dm\log m + ds) = O\left( \frac{d^2}{\eps^2}\log\frac{d}{\eps} + ds \right). \qedhere
	\]
\end{proof}

%%%%%%%%%%%%%%%%%%%%%%%%%%%%%%%%%%%%%%%%%%%%%%%%%%%%%%%%%%%%%%%%%%%
\subparagraph*{Using the Gaussian Sketch HD for high dimensions.}
We first express $\exp(-\norm{x-y}^2)$ as another infinite sum using Lemma~\ref{lem:gaussian_inner_hd}.
Starting with \\$\DK(P,Q) = \sum_{x\in P\cup Q} \sum_{y\in P\cup Q} \beta_x\beta_y\exp\left(-\norm{x-y}^2\right)$ where $\beta_x$ is $\frac{1}{\abs{P}}$ if $x\in P$ and $-\frac{1}{\abs{Q}}$ if $x\in Q$, we have
\begin{align*}
\DK(P,Q)
%& =
%\sum_{x\in P\cup Q} \sum_{y\in P\cup Q} \beta_x\beta_y\exp\left(-\norm{x-y}^2\right) \\
& =
\sum_{x\in P\cup Q} \sum_{y\in P\cup Q} \beta_x\beta_y\inner{ \exp(-\norm{x}^2)\sqrt{\frac{2^i}{i!}}x^{\otimes i}}{\exp(-\norm{y}^2)\sqrt{\frac{2^i}{i!}}y^{\otimes i}}\\
& =
\sum_{i=0}^\infty \norm{ \sum_{x\in P\cup Q} \beta_x\exp(-\norm{x}^2)\sqrt{\frac{2^i}{i!}}x^{\otimes i}}^2.  
\end{align*}
%Recall that $\beta_x$ is $\frac{1}{\abs{P}}$ if $x\in P$ and $-\frac{1}{\abs{Q}}$ if $x\in Q$.

\begin{theorem}\label{thm:main_hd}
	For any $\eps,R,\alpha>0$, let $G$ be randomly chosen from \\$\GSHD_{m_1,\dots,m_s,s}$ with $m_i = O\left(\frac{i}{\eps^2}\right)$ and 
	$s=\Theta\left( \frac{\log \frac{4\exp(2R^2)}{\alpha}}{\log \left(\frac{1}{2eR^2}\log \frac{4\exp(2R^2)}{\alpha}\right)} \right)$.  
	Let $\Lambda^d_R = \{x \in \R^d \mid \|x\|_2 \leq R\}$.  
	Define a mapping function $F$ from any $X \subset \Lambda^d_L$ so $F(X) = \sum_{x \in X} G(x)$, which is a vector in $\R^{m}$ where $m = \sum_{i=1}^s m_i$.  
	Then for any $P,Q \subset \Lambda_R^d$ with probability at least $9/10$
	\[
	\left| \| F(P) - F(Q)\|^2 - \DK(P,Q) \right| \leq \eps \DK(P,Q) + \alpha.
	\]
	The mapping $G : \R^d \to \R^{m}$ can be computed in $O(\frac{s^3}{\eps^2}\log\frac{s}{\eps} + s^2d)$ time.  
\end{theorem}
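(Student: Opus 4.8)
The plan is to follow the same two-source error decomposition used in the proof of Theorem~\ref{thm:main}, but substituting the high-dimensional expansion of Lemma~\ref{lem:gaussian_inner_hd} for Lemma~\ref{lem:gaussian_inner}, the Frobenius-norm concentration of Lemma~\ref{lem:prob_hd} for Lemma~\ref{lem:GS-Cheb}, and the truncation estimate of Lemma~\ref{lem:tail_hd} for Lemma~\ref{lem:tail}. Starting from the expansion displayed just above the theorem, I set $v = \sum_{x \in P \cup Q} \beta_x z_x$, where $z_x$ is the $\frac{d^s-1}{d-1}$-dimensional truncated vector whose blocks are $z_x^{(1)},\dots,z_x^{(s)}$, so that $\norm{v}^2 = \beta^T K^{\textsf{HD}}_{P \cup Q, s} \beta$ is exactly the $s$-term truncation of $\DK(P,Q)$. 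Because $G(x) = S z_x$ for the block-structured random matrix $S$, we have $F(P)-F(Q) = \sum_{x \in P \cup Q} \beta_x G(x) = S v$.

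For the sketching error, the key step is to instantiate Lemma~\ref{lem:prob_hd} by viewing the single weighted combination as a one-row matrix $A = B = v^T \in \R^{1 \times \frac{d^s-1}{d-1}}$. Then $AB^T = \norm{v}^2$ is the truncated distance, $A S^T S B^T = \norm{S v}^2 = \norm{F(P)-F(Q)}^2$, and $\norm{A}_F^2 \norm{B}_F^2 = \norm{v}^4$; since the difference is a scalar its Frobenius norm is its absolute value, so choosing $m_i = O(i/\eps^2)$ with failure probability $\delta = 1/10$ yields $\abs{\norm{F(P)-F(Q)}^2 - \norm{v}^2} \leq \eps \norm{v}^2$ with probability at least $9/10$. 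For the truncation error, I apply Lemma~\ref{lem:tail_hd} with $\xi = (\sum_{x} \abs{\beta_x})^2 \leq 4$ and the stated $s$, which gives the deterministic one-sided bound $0 \leq \DK(P,Q) - \norm{v}^2 = \beta^T(K_{P\cup Q} - K^{\textsf{HD}}_{P\cup Q,s})\beta \leq \alpha$.

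Combining exactly as in Theorem~\ref{thm:main}: the upper side gives $\norm{F(P)-F(Q)}^2 \leq (1+\eps)\norm{v}^2 \leq (1+\eps)\DK(P,Q)$, and the lower side gives $\norm{F(P)-F(Q)}^2 \geq (1-\eps)\norm{v}^2 \geq (1-\eps)(\DK(P,Q)-\alpha)$, so that $\abs{\norm{F(P)-F(Q)}^2 - \DK(P,Q)} \leq \eps\DK(P,Q) + \alpha$ with probability at least $9/10$. For the running time I sum the per-block cost: each $T_j$ is drawn from $\TS_{d,m_j,j-1}$ and so costs $O((j-1)m_j\log m_j + (j-1)d)$; substituting $m_j = O(j/\eps^2)$ and summing over $j=1,\dots,s$ gives $O(\frac{s^3}{\eps^2}\log\frac{s}{\eps} + s^2 d)$.

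The main obstacle I anticipate is bookkeeping rather than conceptual: Lemma~\ref{lem:prob_hd} is phrased for arbitrary matrices $A,B$ with a Frobenius-norm guarantee, so I must verify that reducing it to the rank-one instance $A=B=v^T$ correctly recovers the scalar relative-error statement, and that the block-diagonal structure of $S$ (with the $i$th block realizing $T_i$ on the $z_x^{(i)}$ coordinates) indeed makes $Sv$ equal the stacked sketch $F(P)-F(Q)$. A secondary point is confirming that the truncation bound is one-sided, since the discarded tail is a sum of squared block norms and hence nonnegative; this is precisely what lets the final combination carry over unchanged from the low-dimensional case.
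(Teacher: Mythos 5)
Your proposal matches the paper's own proof essentially step for step: the same rank-one instantiation of Lemma~\ref{lem:prob_hd} with $A=B$ equal to the single row $\sum_{x\in P\cup Q}\beta_x z_x$, the same application of Lemma~\ref{lem:tail_hd} with $\xi=4$ for the one-sided truncation bound, the same final combination as in Theorem~\ref{thm:main}, and the same per-block summation for the running time. It is correct and no different in approach.
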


\begin{proof}
	Suppose $G(x)\in\mathbb{R}^{m}$ with $(m_{i-1}+1)$th coordinate to $m_i$th coordinate be \\$\sqrt{\frac{2^{i-1}}{(i-1)!}}\exp(-\norm{x}^2)T_i(x^{\otimes i-1})$.
	Here, $T_i$ is randomly chosen from \\$\TS_{d,m_i,i-1}$ for $i=1,\dots,s$.
	
	We first need to invoke Lemma \ref{lem:prob_hd} to inherit the appropriate concentration bounds from the \TS.  	We use $t \times \frac{d^s-1}{d-1}$ matrices $A$ and $B$ as just row vectors with $t=1$, and let $A=B$.  In particular, define this single row as $z = \sum_{x \in P \cup Q} \beta_x [z_x^{(1)}, z_x^{(2)}, \ldots,  z_x^{(s)}]$, then the conclusion of Lemma \ref{lem:prob_hd} is that with probability at least $1-\delta$
	\[
	\abs{\norm{z}^2 - \norm{\sum_{x\in P\cup Q}\beta_xG(x)}^2}^2 = \norm{ \norm{z}^2 - z S^T S z^T}_F^2 \leq \eps^2 \norm{z}^4.  
	\]
	
	So by Lemma \ref{lem:tail_hd} the truncation by only $s$ terms can be accounted for as
	\[
	\DK(P,Q) - \|z\|^2 = \beta^T (K_{P \cup Q} - K^{\mathsf{HD}}_{P \cup Q,s}) ) \beta \leq 4 d \exp(2d L^2) (2 e L^2/s)^2 \leq \alpha,
	\]
	where $K_{P \cup Q}$ and  $K^{\mathsf{HD}}_{P \cup Q,s}$ are defined as in Lemma \ref{lem:tail_hd} with $X = P \cup Q$.

	Combining these together we have 
	\[
	(1-\eps)(\DK(P,Q) - \alpha) \leq (1-\eps)\|z\|^2 \leq \norm{F(P) - F(Q)}^2 \leq (1+\eps) \|z\|^2  \leq (1+\eps) \DK(P,Q).
	\]
	and hence as desired
	\[
	\left| \| F(P) - F(Q)\|^2 - \DK(P,Q) \right| \leq \eps \DK(P,Q) + \alpha. 
	\]
	
	Recall that the running time of $G$ for mapping a point is $O(\sum_{i=1}^s im_i\log m_i + id) = O(\sum_{i=1}^s \frac{i^2}{\eps^2}\log \frac{i}{\eps} + id) = O(\frac{s^3}{\eps^2}\log\frac{s}{\eps} + s^2d)$.
\end{proof}

%%%%%%%%%%%%%%%%%%%%%%%%%%%%%%%%%%%%%%%%%%%%%%%%%%%%%%%%%%%%%%%%%%%
%%%%%%%%%%%%%%%%%%%%%%%%%%%%%%%%%%%%%%%%%%%%%%%%%%%%%%%%%%%%%%%%%%%
%%%%%%%%%%%%%%%%%%%%%%%%%%%%%%%%%%%%%%%%%%%%%%%%%%%%%%%%%%%%%%%%%%%
\section{Extensions and Data Analysis Implications}
\label{sec:implications}
There are many data analysis applications where useful sketched bounds almost immediately follow from this new embedding.  Before we begin, we start by improving the dimensionality of the embedding with a simple post-processing.  We can applying a Johnson-Lindenstrauss-type embedding~\cite{JL84,AL09,AL11,Ach03} to the $m$-dimensional space to obtain $O(1/\eps^2)$-dimensional space that, with constant probability, preserves the distance of a pair of point sets. Furthermore, we can use median trick to boost the success probability to $1-\delta$ by running $O(\log \frac{1}{\delta})$ independent copies. For applications in kernel two-sample hypothesis testing and nearest neighbor searching, setting $\delta$ depends on the number of queries $q$ we make, for instances the bounded number needed for $k$-means clustering~\cite{CEMMP15}, now applied to kernel $k$-means.    
%For other applications like kernel $k$-means clustering, we can set $m = \min\{k, \log n\}$~\cite{CEMMP15}.  
%In general this takes time $O(nD\rho)$ to project all $n$ vectors of dimension $D$ to ones of dimension $\rho$, although when $\rho = O(d^{1/2-\eta})$ for some $\eta > 0$, then the runtime can be improved to $O(n D \log \rho)$~\cite{AL09,AL11}.  
These results are useful for reducing the \emph{storage space} of data representations.  
Recall that the running time of JL embedding from $m$-dimensional space to $\rho$-dimensional space is $O(m \log \rho + \rho^2)$ \cite{AL09,AL11}.

%However, they in general do not lead to faster algorithms since often the cost is already dominated by $O(nD)$ time to generate the $D$-dimensional vectors, and the $O(n D \rho)$ or $O(n D \log \rho)$ time to reduce the dimension would only increase that runtime.  
%Hence, because these properties follow fairly directly from previous results, we do not explicitly restate the bounds.  

%In general for (kernel) regression and classification type results, a strong embedding know as an oblivious subspace embedding (OSE)~\cite{clarkson2013low,NN13,Woo14} is usually required.  This requires $\rho = O(D/\eps^2)$ dimensions when starting from $D$-dimensions, so this does not reduce the dimension further, rather it would increase the dimension.   

%%%%%%%%%%%%%%%%%%%%%%%%%%%%%%%%%%%%%%%%%%%%%%%%%%%%%%%%%%%%%%%%%%%%
\subsection{Kernel Two-Sample Test}
%\waiming{For kernel two sample test it is worsth noting that typically, the error in how well the empirical MMD approximates the true MMD is additive. So it seems that in this setting, getting relative error may not be needed. However, I think relative error is still justifiable. If e.g. P = Q, so the true MMD is 0, the empirical MMD will be something like 1/n (There are generic bounds available (see e.g, Gretton et al 2012) but it is instructive to think about the case when P = Q just places probability 1/2 on two points that are very far from each other.) In this setting, to detect the 1/n threshold with additive error methods would require poly(n) samples (so you might as well just compute the full kernel matrix). However, your method can do it with log n dependence (setting alpha = 1/n and epsilon to a constant).}
The kernel two-sample test~\cite{GBRSS12} is a ``non-parametric'' hypothesis test between two probability distributions represented by finite samples $P$ and $Q$; let $n = |P \cup Q|$.  Then this test simply calculates $\DKo(P,Q)$, and if the value is large enough it rejects the null hypothesis that $P$ and $Q$ represent the same distribution.  Since its introduction a few year ago it has seen many applications and relations; see the recent 140 page survey~\cite{MFSS17}.  
Zhao and Deng~\cite{FastMMD} proposed to speed this test up for large sets using RFFs which improves runtime and in some cases even statistical power.  While several improvements are suggested~\cite{ZGB13} including using FastFood~\cite{LSS13}, these all only provide additive $\eps$-error.  

Consider $P \sim \mu_P$ and $Q \sim \mu_Q$.  If $\mu_P = \mu_Q$, then empirical distributions $P,Q$ may have $\DKo(P,Q) = \Theta(1/n)$.  Hence distinguishing the case of $\mu_P = \mu_Q$ from them not being equal would either require additive error $\eps = \Theta(1/n)$, or relative $(1+\eps)$-error with a minimum $\Theta(1/n)$ additive error.  RFFs would require $\Theta(1/\eps^2) = \Theta(n^2)$ dimensions, so one may just as well compute $\DKo(P,Q)$ exactly in $O(n^2)$ time.  
In our approach, we can set $\eps$ to be a constant (say $\eps = 0.2$) and $\alpha$ to be $\Theta(1/n)$.  Assuming a constant region diameter, the total running time is $O\left(\frac{n\log n}{\log \log n}\right)$ in the low dimensional case (by Theorem \ref{thm:main}) or $O\left(\frac{n\log^2n(\log n +d)}{\log^2\log n}\right)$ in the high dimensional case (by Theorem \ref{thm:main_hd}).
%For example, when $L$ and $d$ are fixed (say data drawn from a spatial domain so $d=2$, and the region diameter is 20 times the kernel bandwidth so $L=10$) then it takes only $O(n \log n)$ time.  

%\waiming{check}
%Consider these two situations.
%The first one is $P=Q$.
%The second one is that all points of $P\cup Q$ are far away from each other with $\abs{P}=\abs{Q}=n/2$.
%
%If the distributions are known to inhabit a bounded region $[-L,L]^d$, then we can invoke Theorem \ref{thm:L-bound} to obtain two vectors of size $D_L=O\left(\left(L^d\left(\log \frac{1}{\eps\alpha}\right)^{d/2}+ \left(\log\frac{1}{\alpha}\right)^{d/2}\left(\log \frac{1}{\eps\alpha}\right)^{d/2}\right)\frac{1}{\eps^2}\log \frac{1}{\delta}\right)$ in $O(nD)$ time, and compute a kernel means $\Phi(P)$ and $\Phi(Q)$ for each, and then compare their distances in $O(D)$ time.  Hence the entire estimate takes $O(nD)$ time.  
%As long as $\DKo(P,Q) > \alpha$, this guarantees the estimate of $\DKo(P,Q)$ is within $\eps \DKo(P,Q)$ of the correct value.  

Another way to determine if $\DKo(P,Q)$ should estimate $P$ and $Q$ as distinct, is to run permutation tests.  That is for some large number (e.g., $q=1000$) of trials, select two sets $P_j, Q_j$ iid from $P \cup Q$, of size $|P|$ and $|Q|$ respectively.  For each generated pair we calculate (or estimate using Theorem \ref{thm:main} or Theorem \ref{thm:main_hd}) the value of $\DKo(P_j,Q_j)$, and then use the $95$th-percentile of these values as a threshold.  Note since each $P_j, Q_j$ is drawn from the same domain as $P,Q$, then the guarantees on the accuracy of the featurized estimate carries over directly even under a large $q$ number of permutations.

%%%%%%%%%%%%%%%%%%%%%%%%%%%%%%%%%%%%%%%%%%%%%%%%%%%%%%%%%%%%%%%%%%%%
%\subparagraph*{LSH for point sets, geometric distributions.}
\subsection{LSH for Point Sets, Geometric Distributions}
%\waiming{In the LSH section, I'm unclear on how you compute the kernel distance between two shapes (is it a between uniform distributions over these shapes?). And how would you approximate this using your methods since it typically has infinite points?}

The new results also allow us to immediately design LSH and nearest neighbor structures for the kernel distance by relying on standard Euclidean LSH~\cite{AI06}.  
Building a search engine for low-dimensional shapes~\cite{FMKCHDJ03} has long been a goal in computational geometry and geometric modeling.  A difficulty arises in that many of the best-known shape distance measures require an alignment (e.g., Frechet~\cite{TEHM1994,AG96} or earth movers~\cite{BNNR11}) which creates many challenges in designing LSH-type procedures.  Some methods have been designed, but with limitations, e.g., on point set size for earth mover distance~\cite{AIK08} or number of segments in curves for discrete Frechet~\cite{DS17}.    
The kernel distance provides an alternative distance for shapes, low-dimensional distributions, or curves~\cite{current}; it can encode normals or tangents as well to encode direction information of curves~\cite{GlaunesJoshi:MFCA:06}.  That is, given two shapes composed of (or approximated by) point sets $P_i, P_j$, the distance between the shapes is simply $\DKo(P_i,P_j)$.  

%Now after embedding to $D_n$ dimensions for a bounded number of points (via Theorem \ref{thm:n-bound}) or to $D_L$ dimensions for objects from a bounded $[-L,L]^d$ domain (via Theorem \ref{thm:L-bound}), standard Euclidean LSH can be applied~\cite{AI06}.  
Given a family of point sets $\Eu{P} = \{P_1, P_2, \ldots, P_N\}$ such that each $P_i \subset \R^d$ has size at most $n$, an $\eps$-approximate nearest neighbor of a query point set $Q$ is a point set $\hat P \in \Eu{P}$ so that $\DKo(\hat P,Q) \leq (1+\eps) \min_{P_j \in \Eu{P}} \DKo(P_j,Q)$.  
Here, we assume that $\DKo(P_i,P_j) \geq \alpha'$ for any $i\neq j$.
For $\eps \leq 1/2$, we can embed each $P_j$ to $F(P_j) \in \R^D$, and then invoke the key result from Andoni and Indyk~\cite{AI06} for a $c'$-approximate nearest neighbor, so the total error factor is $c'(1+\eps)$.  
Overall, we can retrieve a $c$-approximate nearest neighbor (setting $c = c'(1+\eps)$) to a query $Q \subset \R^d$ with $O(D N^{1/c^2 + o(1)})$ query time after using $O(D N^{1+1/c^2 + o(1)})$ space and $O(D N^{1+1/c^2 + o(1)} + N(\frac{n\log \frac{1}{\eps\alpha'}}{\log \log \frac{1}{\eps\alpha'}} + \frac{1}{\eps^2}\log \frac{1}{\eps}))$ preprocessing when $d$ is small or $O(D N^{1+1/c^2 + o(1)} + Nn(\frac{\log^2\frac{1}{\eps\alpha'}(\log \frac{1}{\eps\alpha'}+d)}{\eps^2\log^2\log\frac{1}{\eps\alpha'}}) )$ preprocessing when $d$ is large, both assuming a data region with constant diameter.

%%%%%%%%%%%%%%%%%%%%%%%%%%%%%%%%%%%%%%%%%%%%%%%%%%%%%%%%%%%%%%%%%%%%
%%%%%%%%%%%%%%%%%%%%%%%%%%%%%%%%%%%%%%%%%%%%%%%%%%%%%%%%%%%%%%%%%%%%
%\section{Conclusion}
%\label{sec:conclude}
%
%We have devised and analyzed two new sketches for approximately embedding a Gaussian kernel function $K(p,\cdot)$ into an RKHS, where these sketched vectors can simply be averaged to also approximate the kernel density estimate $\frac{1}{|P|} \sum_{p \in P} K(p,\cdot)$ of a point set $P \subset \R^d$.  The two sketches can provide $(1+\eps)$-relative error of the natural metric in the RKHS, with a small additive term $\alpha$.  
%The runtime of generating the sketches are either sub-logarithmic in $1/\alpha$ with higher polynomial term in $d$, or are linear in $d$ and poly-logarithmic in $1/\alpha$.  The time to sketch a single item does not depend on the size of $|P|$, and can always be reduced roughly $1/\eps^2$ dimensions (independent on $d$ and $\alpha$) using JL.  These new embeddings imply new algorithmic results in important kernel-based data analysis including kernel $k$-means, kernel two-sample test, and nearest neighbors for the kernel distance.  

%%
%% Bibliography
%%

%% Please use bibtex, 

%%%%%%%%%%%%%%%%%%%%%%%%%%%%%%%%%%%%%%%%%%%%%%%%%%%%%%%%%%%%%%%%%%%
%%%%%%%%%%%%%%%%%%%%%%%%%%%%%%%%%%%%%%%%%%%%%%%%%%%%%%%%%%%%%%%%%%%
%%%%%%%%%%%%%%%%%%%%%%%%%%%%%%%%%%%%%%%%%%%%%%%%%%%%%%%%%%%%%%%%%%%
%%%%%%%%%%%%%%%%%%%%%%%%%%%%%%%%%%%%%%%%%%%%%%%%%%%%%%%%%%%%%%%%%%%
%\bibliographystyle{plain}
\bibliography{gaussiansketch}

\begin{thebibliography}{10}

\bibitem{Ach03}
Dmitris Achlioptas.
\newblock Database-friendly random projections: Johnson-lindenstrauss with
  binary coins.
\newblock {\em Journal of Comp. \& Sys. Sci.}, 66:671--687, 2003.

\bibitem{ahle2019oblivious}
Thomas~D Ahle, Michael Kapralov, Jakob~BT Knudsen, Rasmus Pagh, Ameya
  Velingker, David~P Woodruff, and Amir Zandieh.
\newblock Oblivious sketching of high-degree polynomial kernels.
\newblock In {\em SODA}, 2020.

\bibitem{AL09}
Nir Ailon and Edo Liberty.
\newblock Fast dimension reduction using rademacher series on dual bch codes.
\newblock {\em Discrete \& Computational Geometry}, 42(615), 2009.

\bibitem{AL11}
Nir Ailon and Edo Liberty.
\newblock An almost optimal unrestricted fast johnson-lindenstrauss transform.
\newblock In {\em SODA}, 2011.

\bibitem{AG96}
Helmut Alt and Leonidas~J. Guibas.
\newblock Discrete geometric shapes: Matching, interpolation, and
  approximation: A survey.
\newblock In {\em Handbook of Computational Geometry}. -, 1996.

\bibitem{AI06}
Alexandr Andoni and Piotr Indyk.
\newblock Near-optimal hashing algorithms for approximate nearest neighbor in
  high dimensions.
\newblock In {\em FOCS}, 2006.

\bibitem{AIK08}
Alexandr Andoni, Piotr Indyk, and Robert Krauthgamer.
\newblock Earth mover distance over high-dimensional spaces.
\newblock In {\em SODA}, 2008.

\bibitem{Aronszajn1950}
N.~Aronszajn.
\newblock Theory of reproducing kernels.
\newblock {\em Trans. AMS}, 68:337--404, 1950.
\newblock URL: \url{http://www.jstor.org/stable/1990404}.

\bibitem{AKMMVZ17}
Haim Avron, Michael Kapralov, Cameron Musco, Chistopher Musco, Ameya Velingker,
  and Amir Zandier.
\newblock Random fourier features for kernel ridge regression: Approximation
  bounds and statistical guarantees.
\newblock In {\em ICML}, 2017.

\bibitem{ANW14}
Haim Avron, Huy~L. Nguyen, and David~P. Woodruff.
\newblock Subspace embeddings for the polynomial kernel.
\newblock In {\em NIPS}, 2014.

\bibitem{BNNR11}
Khanh~Do Ba, Huy~L. Nguyen, Huy~N. Nguyen, and Ronnit Rubinfeld.
\newblock Sublinear time algorithms for earth mover's distance.
\newblock {\em Theory Comput Syst}, 48:428--442, 2011.

\bibitem{CS17}
Moses Charikar and Paris Siminelakis.
\newblock Hashing-based-estimators for kernel density in high dimensions.
\newblock In {\em FOCS}, 2017.

\bibitem{CCL17}
Edgar Cha\'{v}ez, Ana C.~Ch\'{a}vez C\'{a}liz, and Jorge~L.
  L\'{o}pez-L\'{o}pez.
\newblock Affine invariants of generalized polygons and matching under affine
  transformations.
\newblock {\em Computational Geometry: Theory and Applications}, 58:60--69,
  2017.

\bibitem{CP17}
Di~Chen and Jeff~M. Phillips.
\newblock Relative error embeddings for the gaussian kernel distance.
\newblock In {\em Algorithmic Learning Theory}, 2017.

\bibitem{clarkson2009numerical}
Kenneth~L Clarkson and David~P Woodruff.
\newblock Numerical linear algebra in the streaming model.
\newblock In {\em Proceedings of the forty-first annual ACM symposium on Theory
  of computing}, pages 205--214. ACM, 2009.

\bibitem{CEMMP15}
Michael~B. Cohen, Sam Elder, Cameron Musco, Christopher Musco, and
  M\u{a}d\u{a}lina Persu.
\newblock Dimensionality reduction for k-means clustering and low rank
  approximation.
\newblock In {\em STOC}, 2015.

\bibitem{cotter2011explicit}
Andrew Cotter, Joseph Keshet, and Nathan Srebro.
\newblock Explicit approximations of the gaussian kernel.
\newblock {\em arXiv preprint arXiv:1109.4603}, 2011.

\bibitem{DS17}
Anne Driemel and Francesco Silvestri.
\newblock Locality-sensitive hashing of curves.
\newblock In {\em 33rd International Symposium on Computational Geometry},
  2017.

\bibitem{drineas2005nystrom}
Petros Drineas and Michael~W Mahoney.
\newblock On the nystr{\"o}m method for approximating a gram matrix for
  improved kernel-based learning.
\newblock {\em The Journal of Machine Learning Research}, 6:2153--2175, 2005.

\bibitem{TEHM1994}
Thomas Eiter and Heikki Mannila.
\newblock Computing discrete {F}rechet distance.
\newblock Technical report, Christian Doppler Laboratory for Expert Systems,
  1994.

\bibitem{FMKCHDJ03}
Thomas Funkhouser, Patrick Min, Michael Kazhdan, Joyce Chen, Alex Halderman,
  David Dobkin, and David Jacobs.
\newblock A search engine for {3D} models.
\newblock {\em ACM Transactions on Graphics}, 22:83--105, 2003.

\bibitem{GPP16}
Mina Ghashami, Daniel Perry, and Jeff~M. Phillips.
\newblock Streaming kernel principal component analysis.
\newblock In {\em AIStats}, 2016.

\bibitem{GlaunesJoshi:MFCA:06}
Joan Glaun\`{e}s and Sarang Joshi.
\newblock Template estimation form unlabeled point set data and surfaces for
  computational anatomy.
\newblock In {\em Math. Found. Comp. Anatomy}, 2006.

\bibitem{GBRSS12}
Arthur Gretton, Marsten~M. Borgwardt, Malte~J. Rasch, Bernhard Sch\"olkopf, and
  Alex~J. Smola.
\newblock A kernel two-sample test.
\newblock {\em Journal of Machine Learning Research}, 13:723--773, 2012.

\bibitem{JL84}
William~B. Johnson and Joram Lindenstrauss.
\newblock Extensions of {L}ipschitz maps into a {H}ilbert space.
\newblock {\em Contemporary Mathematics}, 26:189--206, 1984.

\bibitem{current}
Sarang Joshi, Raj~Varma Kommaraju, Jeff~M. Phillips, and Suresh
  Venkatasubramanian.
\newblock Comparing distributions and shapes using the kurrent distance.
\newblock In {\em Proceedings 27th Annual Symposium on Computational Geometry},
  2011.
\newblock arXiv:1001.0591.

\bibitem{kannan2014principal}
Ravi Kannan, Santosh Vempala, and David Woodruff.
\newblock Principal component analysis and higher correlations for distributed
  data.
\newblock In {\em Conference on Learning Theory}, pages 1040--1057, 2014.

\bibitem{LN17}
Kasper~Green Larsen and Jelani Nelson.
\newblock Optimality of the johnson-lindenstrauss lemma.
\newblock In {\em FOCS}, 2017.

\bibitem{LSS13}
Quoc Le, Tam{\'a}s Sarl{\'o}s, and Alex Smola.
\newblock Fastfood --- approximating kernel expansions in loglinear time.
\newblock In {\em ICML}, 2013.

\bibitem{lopez2014randomized}
David Lopez-Paz, Suvrit Sra, Alex Smola, Zoubin Ghahramani, and Bernhard
  Sch{\"o}lkopf.
\newblock Randomized nonlinear component analysis.
\newblock {\em ICML}, 2014.

\bibitem{Mer09}
J.~Mercer.
\newblock Functions of positive and negative type, and their connection with
  the theory of integral equations.
\newblock {\em Philosophical Transactions of the Royal Society of London A:
  Mathematical, Physical and Engineering Sciences}, 209:441--458, 1909.

\bibitem{muandet2016kernel}
Krikamol Muandet, Kenji Fukumizu, Bharath Sriperumbudur, and Bernhard
  Sch{\"o}lkopf.
\newblock Kernel mean embedding of distributions: A review and beyond.
\newblock {\em arXiv preprint arXiv:1605.09522}, 2016.

\bibitem{MFSS17}
Krikamol Muandet, Kenji Fukumizu, Bharath Sriperumbudur, and Bernhard
  Sch\"{o}lkopf.
\newblock Kernel mean embedding of distributions: A review and beyond.
\newblock {\em Foundations and Trends in Machine Learning}, 10:1--141, 2017.

\bibitem{MW17}
Cameron Musco and David Woodruff.
\newblock Is input sparsity time possible for kernel low-rank approximation?
\newblock In {\em NeurIPS}, 2017.

\bibitem{MW17b}
Cameron Musco and David~P. Woodruff.
\newblock Sublinear time low-rank approximation of positive semidefinite
  matrices.
\newblock In {\em FOCS}, 2017.

\bibitem{GI}
Jeff~M. Phillips and Suresh Venkatasubramanian.
\newblock A gentle introduction to the kernel distance.
\newblock Technical report, Arxiv:1103.1625, 2011.

\bibitem{rahimi2007random}
Ali Rahimi and Benjamin Recht.
\newblock Random features for large-scale kernel machines.
\newblock In {\em Advances in neural information processing systems}, pages
  1177--1184, 2007.

\bibitem{sejdinovic2013equivalence}
Dino Sejdinovic, Bharath Sriperumbudur, Arthur Gretton, and Kenji Fukumizu.
\newblock Equivalence of distance-based and rkhs-based statistics in hypothesis
  testing.
\newblock {\em The Annals of Statistics}, pages 2263--2291, 2013.

\bibitem{smola}
Alex~J. Smola, Arthur Gretton, Le~Song, and Bernhard Sch\"olkopf.
\newblock A {H}ilbert space embedding for distributions.
\newblock In {\em ICALT}, 2007.

\bibitem{sriperumbudur2016optimal}
Bharath Sriperumbudur et~al.
\newblock On the optimal estimation of probability measures in weak and strong
  topologies.
\newblock {\em Bernoulli}, 22(3):1839--1893, 2016.

\bibitem{SS17}
Bharath Sriperumbudur and Nicholas Sterge.
\newblock Approximate kernel pca using random features: Computational vs.
  statistical trade-off.
\newblock Technical report, arXiv: 1706.06296, 2018.

\bibitem{SFL11}
Bharath~K. Sriperumbudur, Kenji Fukumizu, and Gert R.~G. Lanckriet.
\newblock Universality, characteristic kernels and rkhs embedding of measures.
\newblock {\em JMLR}, pages 2389--2410, 2011.

\bibitem{SGFSL10}
Bharath~K. Sriperumbudur, Arthur Gretton, Kenji Fukumizu, Bernhard Sch\"olkopf,
  and Gert R.~G. Lanckriet.
\newblock Hilbert space embeddings and metrics on probability measures.
\newblock {\em Journal of Machine Learning Research}, 11:1517--1561, 2010.

\bibitem{UMMA18}
Enayat Ullah, Poorya Mianjy, Teodor~V. Marinov, and Raman Arora.
\newblock Streaming kernel pca with $\tilde o(\sqrt{n})$ random features.
\newblock In {\em NeruIPS}, 2018.

\bibitem{WGM17}
Shusen Wang, Alex Gittens, and Michael~W. Mahoney.
\newblock Scalable kernel k-means clustering with nystrom approximation:
  Relative-error bounds.
\newblock {\em JMLR}, [arXiv:1706.02803], (to appear).

\bibitem{Woo14}
David~P. Woodruff.
\newblock Sketching as a tool for numerical linear algebra.
\newblock {\em Foundations and Trends in Theoretical Computer Science},
  10:1--157, 2014.

\bibitem{ZGB13}
Wojciech Zaremba, Arthur Gretton, and Matthew Blaschko.
\newblock B-tests: Low variance kernel two-sample tests.
\newblock In {\em NIPS}, 2013.

\bibitem{FastMMD}
Ji~Zhao and Deyu Meng.
\newblock Fastmmd: Ensemble of circular discrepancy for efficient two-sample
  test.
\newblock {\em Neural Computation}, 27:1354--1372, 2015.

\end{thebibliography}

\appendix

\section{Gaussian Kernel PCA}
\label{sec:kpca}
Let $k$ be a positive integer and $\eps>0$.
Avron \etal~\cite{ANW14} provide the following algorithm.
Suppose $S$ and $T$ are randomly chosen from $\TS_{s,m,d}$ and \\$\TS_{s,r,d}$ respectively where $m=\Theta(d(k^2+\frac{k}{\eps}))$ and $r=\Theta(\frac{dm^2}{\eps^2})$.
Given $n$ vectors $v^{(1)},\dots,v^{(n)} \in \mathbb{R}^{s^d}$, compute $n \times m$ matrix $M$ with $i$th row as $S(v^{(i)})$ and $n \times r$ matrix $N$ that $i$th row as $T(v^{(i)})$.
Let $U$ be the orthonormal basis for column space of $M$ and $W$ be $m\times k$ matrix containing top $k$ left singular vector of $U^TN$.
Finally, return $V=UW$.
This algorithm has the following guarantee.

\begin{lemma}[\cite{ANW14} with straightforward modification]~\label{lem:rankk}
	Given a $n$-by-$s^d$ matrix $A$, a positive integer $k$ and $\eps>0$.
	The above algorithm that has rows of $A$ as input returns a matrix $V$ such that
	\[
	\norm{A - VV^TA}_F^2 \leq (1+\eps)\norm{A-[A]_k}_F^2
	\]
	where $[A]_k$ is the best rank-$k$ approximation of $A$.
\end{lemma}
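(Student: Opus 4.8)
The plan is to treat this as exactly the sketched low-rank approximation algorithm and analysis of Avron, Nguyen and Woodruff~\cite{ANW14}, the only modification being that their \textsc{TensorSketch} is replaced by the \TS of Ahle \etal. Because \TS is an oblivious linear map, the whole argument is linear-algebraic and depends on just two properties of the sketch, stated for its matrix form (writing $S(v)=Sv$ for the $m\times s^d$ matrix $S$): (P1) that $S$ is a $(1\pm\tfrac{1}{2})$-subspace embedding for any fixed $O(k)$-dimensional subspace of $\R^{s^d}$, and (P2) an approximate-matrix-multiplication bound $\norm{AB^T-AS^TSB^T}_F^2\le \gamma^2\norm{A}_F^2\norm{B}_F^2$, which for \TS follows from the quoted variance bound $\Var(\inner{Su}{Sv})\le\frac{\gamma^2}{10}\norm{u}^2\norm{v}^2$ by summing variances over the entries of the product, exactly as in the proof of Lemma~\ref{lem:prob_hd}. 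The stated sizes $m=\Theta(d(k^2+k/\eps))$ and $r=\Theta(dm^2/\eps^2)$ are the \cite{ANW14} dimensions with the factor $3^d$ of \textsc{TensorSketch} replaced by the linear factor $d$ that \TS achieves; this is the sole source of the improvement.

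Granting (P1) and (P2), I would run the standard two-stage argument, one stage per sketch. In the first stage $S$ serves only to expose a subspace: let $U$ be an orthonormal basis of the column space of $M=AS^T$. The classical ``a column-sketch captures a good low-rank subspace'' lemma --- which needs (P1) at constant accuracy for the top-$k$ right singular space of $A$, plus (P2) at the appropriate accuracy applied to the pair $([A]_k,\,A-[A]_k)$ --- guarantees a rank-$k$ matrix whose columns lie in $\mathrm{col}(U)$ and is within $(1+\eps)$ of $[A]_k$. The best such matrix is $U[U^TA]_k$, so
\[
\norm{A-U[U^TA]_k}_F^2 \le (1+\eps)\,\norm{A-[A]_k}_F^2 .
\]
Here the $k^2$ summand of $m$ pays for the constant-factor embedding and the $k/\eps$ summand pays for the rank-$k$ regression.

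The second stage handles the fact that $U^TA$ is an $m\times s^d$ matrix we cannot afford to form. The algorithm instead forms the column-sketch $U^TN=U^TAT^T$ and takes $W$ to be its top-$k$ left singular vectors. I would show this is precisely the problem of approximating $[B]_k$ for $B:=U^TA$ from the sketch $BT^T$: since the row space of $B$ has dimension at most $m$ (which is why $r$ carries the $m^2$ factor), applying (P1)/(P2) for $T$ to that $m$-dimensional space yields $\norm{B-WW^TB}_F^2\le(1+\eps)\norm{B-[B]_k}_F^2$, and orthonormality of $U$ lifts this to $\norm{A-UWW^TU^TA}_F^2\le(1+\eps)\norm{A-U[U^TA]_k}_F^2$. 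With $V=UW$ we have $VV^TA=UWW^TU^TA$, so composing the two stages gives a factor $(1+\eps)^2$ (all invocations of (P1), (P2) holding with small constant failure probability and combined by a union bound), which I rescale to $(1+\eps)$ by absorbing constants into $\eps$.

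The step I expect to be the real obstacle is property (P1), since the excerpt quotes for \TS only a pairwise variance bound, not a uniform subspace embedding. I would bootstrap (P1) from (P2): apply the approximate-matrix-multiplication bound with both arguments equal to an orthonormal basis $Z$ (with $\ell$ rows) of the target $\ell$-dimensional subspace, so that $ZZ^T=I_\ell$, $\norm{Z}_F^2=\ell$, and the bound becomes $\norm{I_\ell-ZS^TSZ^T}_F\le \gamma\ell$; passing from the Frobenius to the spectral norm turns this into a $(1\pm\gamma\ell)$-subspace embedding on that subspace. The care required is to choose the internal accuracy $\gamma$ so that $\gamma\ell$ is the desired embedding accuracy, using $m=O(d/\gamma^2)$ from the \TS variance bound: with $\ell=k$ and constant target accuracy this needs $\gamma=\Theta(1/k)$ and hence $m=\Theta(dk^2)$ in Stage 1, while with $\ell=m$ and target accuracy $\eps$ it needs $\gamma=\Theta(\eps/m)$ and hence $r=\Theta(dm^2/\eps^2)$ in Stage 2 --- exactly the claimed sizes. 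Once (P1) and (P2) hold with these parameters, the remainder is the verbatim analysis of~\cite{ANW14}.
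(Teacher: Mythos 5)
Your proposal is correct and follows essentially the same route the paper takes: the paper proves this lemma by deferring to \cite{ANW14} (and, in the parallel Lemma~\ref{lem:pca_hd}, makes explicit that the only sketch properties needed are the subspace-embedding and approximate-matrix-multiplication guarantees, with the former bootstrapped from the latter exactly as you do and the AMM bound obtained from the \TS variance bound as in Lemma~\ref{lem:prob_hd}). Your derivation of $m=\Theta(d(k^2+k/\eps))$ and $r=\Theta(dm^2/\eps^2)$ from these two properties matches the paper's parameter choices, so no further comparison is needed.
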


Now, we can directly modify the above algorithm into our context for 
rank-$k$ Gaussian low-rank approximation.
Given a point set $X=\{x_1,\dots,x_n\}\subset \mathbb{R}^d$ and a positive integer $s$. 
Suppose $G$ and $H$ are randomly chosen from $\textsc{GaussianSketch}_{m,s}$ and $\textsc{GaussianSketch}_{r,s}$ respectively.
Recall that $m=\Theta(d(k^2+\frac{k}{\eps}))$ and $r=\Theta(\frac{dm^2}{\eps^2})$.
Compute the $n \times m$ matrix $M$ with $i$th row as $G(x_i)$ and $n \times r$ matrix $N$ with $i$th row as $H(x_i)$.
Let $U$ be the orthonormal basis for column space of $M$ and $W$ be $m\times k$ matrix containing top $k$ left singular vector of $U^TN$.
Finally, return $V=UW$.

\begin{theorem}~\label{thm:main_pca}
	%For any $\eps,L,\alpha>0$, let $G$ and $H$ be randomly chosen from $\GS_{m,s}$ and $\GS_{r,s}$ with 
	%$m = \Theta(3^d(k^2+\frac{k}{\eps}))$, 
	%$r=\Theta(\frac{3^dm^2}{\eps^2})$, and 
	%$s = \Theta\left( \frac{\log \frac{4n^2d\exp(2dL^2)}{\alpha}}{\log \left(\frac{1}{2eL^2}\log \frac{4n^2d\exp(2dL^2)}{\alpha}\right)} \right)$. 
	Let $\eps,L,\alpha>0$ and $s = \Theta\left( \frac{\log \frac{4n^2d\exp(2dL^2)}{\alpha}}{\log \left(\frac{1}{2eL^2}\log \frac{4n^2d\exp(2dL^2)}{\alpha}\right)} \right)$.
	For $\Omega^d_L = \{x \in \R^d \mid \|x\|_\infty \leq L\}$ and $X \subset \Omega_L^d$, and let $A_X$ be a pd matrix with elements $(A_X)_{i,j} = K(x_i,x_j) = \exp(-\norm{x_i-x_j}^2)$ for $x_i,x_j \in X$ and factorization $A_X = B_X B_X^T$.  
	%Compute the $n \times k$ matrix $V = UW$ as the product of orthogonal basis $U,W$ from $n \times m$ matrix $M$ with rows $G(x_i)$ and $n \times r$ matrix $N$ with rows $H(x_i)$ using $x_i \in X$ as described above.  
	Then with constant probability
	\[
	\norm{B_X - VV^T B_X}_F^2 \leq (1 + \eps) \norm{B_X - [B_X]_k}_F^2 + \alpha.
	\]
	The runtime to compute $V$ is $O\left(nds + n\frac{d^4(k^2+\frac{k}{\eps})^3}{\eps^2}\right)$.
\end{theorem}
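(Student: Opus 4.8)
The plan is to reduce to Lemma~\ref{lem:rankk} applied to the \emph{truncated} feature representation, and then absorb the truncation error into the additive $\alpha$ term. First I would introduce the $n \times s^d$ matrix $A$ whose $i$th row is the truncated tensor $y^{(1)}_{x_i} \otimes \cdots \otimes y^{(d)}_{x_i}$, so that $A A^T = K^{\mathsf{GS}}_{X,s}$, the $s$-truncated Gram matrix of Lemma~\ref{lem:tail}. Since $G(x_i)$ and $H(x_i)$ apply the tensor sketches (from $\TS_{s,m,d}$ and $\TS_{s,r,d}$) to exactly this row, the matrices $M$ and $N$ constructed by the algorithm are precisely the sketches of the rows of $A$. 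Hence Lemma~\ref{lem:rankk} applies verbatim and yields, with constant probability, $\norm{A - VV^T A}_F^2 \leq (1+\eps)\norm{A - [A]_k}_F^2$.

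Next I would translate both sides from $A$ to $B_X$. Write $E = K_X - K^{\mathsf{GS}}_{X,s}$; the proof of Lemma~\ref{lem:tail} exhibits $w^T E w$ as a sum of squares, so $E \succeq 0$. Because $V$ has orthonormal columns ($V^TV = W^TU^TUW = I_k$), both $VV^T$ and $I - VV^T$ are orthogonal projections, so for any factorization $\norm{B_X - VV^T B_X}_F^2 = \Tr((I-VV^T)K_X)$ and likewise $\norm{A - VV^T A}_F^2 = \Tr((I-VV^T)K^{\mathsf{GS}}_{X,s})$; subtracting gives the exact identity $\norm{B_X - VV^T B_X}_F^2 = \norm{A - VV^T A}_F^2 + \Tr((I-VV^T)E)$. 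For the benchmark term, $K^{\mathsf{GS}}_{X,s} = K_X - E \preceq K_X$ together with eigenvalue monotonicity gives $\lambda_i(K^{\mathsf{GS}}_{X,s}) \le \lambda_i(K_X)$ for every $i$, hence $\norm{A-[A]_k}_F^2 = \sum_{i>k}\lambda_i(K^{\mathsf{GS}}_{X,s}) \le \sum_{i>k}\lambda_i(K_X) = \norm{B_X-[B_X]_k}_F^2$.

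The crux, and the step I expect to be the main obstacle, is bounding the leftover term $\Tr((I-VV^T)E)$ by $\alpha$. Here I would decompose $I - VV^T = \sum_{j=1}^{n-k} c_j c_j^T$ into an orthonormal basis of its range, so that $\Tr((I-VV^T)E) = \sum_j c_j^T E c_j$. Each $c_j$ is a unit vector, so Cauchy--Schwarz gives $(\sum_i \abs{c_{j,i}})^2 \le n$, and Lemma~\ref{lem:tail} (with $\xi = n$) bounds each summand by $n \cdot d\exp(2dL^2)(2eL^2/s)^s$; summing over the at most $n$ basis vectors yields $\Tr((I-VV^T)E) \le n^2 d\exp(2dL^2)(2eL^2/s)^s$. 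The stated choice of $s$ is exactly what Lemma~\ref{lem:tail-GS} (with $\xi = 4n^2$, $a = d\exp(2dL^2)$, $b = 2eL^2$) requires to drive this below $\alpha$, which is what forces the $4n^2$ inside the logarithm rather than the $4$ of Theorem~\ref{thm:main}; the $\ell_1$-to-$\ell_2$ slack across the $n$-dimensional complement is precisely the reason the aggregate error grows like $n^2$. Chaining the three bounds gives $\norm{B_X - VV^T B_X}_F^2 \le (1+\eps)\norm{B_X-[B_X]_k}_F^2 + \alpha$.

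Finally, for the runtime I would tally the sketch evaluations against the dense linear algebra. Applying $G$ and $H$ to all $n$ points costs $O(nds)$ for the tensor-formation stages (the $O(dm\log m)$ and $O(dr\log r)$ sketch terms being lower order), while the dominant linear-algebra cost is forming $U^T N$, an $(m \times n)$-by-$(n \times r)$ product, in $O(nmr)$ time. Substituting $m = \Theta(d(k^2 + k/\eps))$ and $r = \Theta(dm^2/\eps^2)$ gives $O(nmr) = O(n d^4 (k^2 + k/\eps)^3 / \eps^2)$, matching the claimed $O(nds + n d^4(k^2+k/\eps)^3/\eps^2)$.
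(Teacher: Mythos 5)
Your proof is correct and follows essentially the same route as the paper: reduce to Lemma~\ref{lem:rankk} on the truncated representation, bound $\Tr\left((I-VV^T)(K_X - K^{\mathsf{GS}}_{X,s})\right)$ via Lemma~\ref{lem:tail} with the $n^2$ factor absorbed by the choice of $s$ through Lemma~\ref{lem:tail-GS}, and show the rank-$k$ benchmark only shrinks under truncation. The only cosmetic differences are that you use Weyl eigenvalue monotonicity for the benchmark comparison (the paper instead compares against the projection onto the top-$k$ left singular vectors of $B_X$, exploiting positive definiteness of the difference) and an orthonormal decomposition of $I-VV^T$ for the trace bound (the paper argues row by row with an $\alpha/n$ budget per row); both yield the same bounds and the same runtime accounting.
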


\begin{proof}
	Let $v_x^{(i)}$ be a vector in $\mathbb{R}^s$ with $j$th coordinate to be $\exp(-x_i^2)\sqrt{\frac{2^{j-1}}{(j-1)!}} x^{j-1}_i$ for any $x\in \mathbb{R}^d$.
	
	By Lemma~\ref{lem:rankk}, taking $A_s$ as an $n \times s^d$ matrix with $i$th row as $v_{x_i}^{(1)}\otimes\cdots\otimes v_{x_i}^{(d)}$.
	We have
	\[
	\norm{A_s - VV^TA_s}_F^2 \leq (1+\eps)\norm{A_s-[A_s]_k}_F^2
	\] 
	
	From Lemma \ref{lem:tail}, $v^T(B_XB_X^T - A_sA_s^T)v \leq \left(\sum_{i=1}^n \abs{v_i}\right)^2 d\exp(2dL^2)\left(\frac{2eL^2}{s}\right)^s \leq \alpha/n$.
	To see this expression is at most $\alpha/n$, first observe that columns of $V$ are orthonormal, and therefore, the norm of each row of $I-VV^T$ is at most $2$.  Hence, $\left(\sum_{i=1}^n \abs{v_i}\right)^2 \leq 4n$.  Then the choice of $s$ and Lemma \ref{lem:tail-GS} with $\xi = 4n^2$, $a= d \exp(2dL^2)$ and $b = 2 e L^2$ complete this derivation.  
	
	We now have
	\begin{align*}
	\norm{B_X-VV^TB_X}_F^2
	& =
	\Tr((I-VV^T)B_XB_X^T(I-VV^T)^T) \\
	& \leq
	\norm{A_s-VV^TA_s}_F^2 + \Tr((I-VV^T)(B_XB_X^T - A_sA_s^T)(I-VV^T)^T)  \\
	& \leq
	\norm{A_s-VV^TA_s}_F^2 + \alpha
	\end{align*}
	
	On the other hand, by Lemma \ref{lem:gaussian_inner}, $B_XB_X^T - A_sA_s^T$ is still positive definite. 
	Therefore, 
	\begin{align*}
	&
	\norm{A_s-[A_s]_k}_F^2 \\
	& =
	\norm{A_s-UU^TA_s}_F^2 \quad \text{where $U$ is the matrix of top-$k$ left singular vectors of $A_s$} \\
	& \leq
	\norm{A_s - U'U'^TA_s}_F^2 \quad \text{where $U$ is the matrix of top-$k$ left singular vectors of $B_X$}\\
	& =
	\norm{B_X-[B_X]_k}_F^2 - \Tr((I-U'U'^T)(B_XB_X^T - A_sA_s^T)(I-U'U'^T)) \\
	& \leq
	\norm{B_X-[B_X]_k}_F^2 \quad \text{recall that $B_XB_X^T - A
		_sA_s^T$ is positive definite}
	\end{align*}
	
	We can plug in everything.
	\begin{align*}
	\norm{B_X-VV^TB_X}_F^2
	& \leq
	\norm{A_s-VV^TA_s}_F^2 + \alpha \\
	& \leq
	\norm{A_s-[A_s]_k}_F^2 + \alpha \\
	& \leq
	\norm{B_X-[B_X]_k}_F^2 + \alpha.  
	\end{align*}	
	
	To see the running time, it takes $O(d(s+m\log m))$ to compute $G(\cdot)$ and $O(d(s+r\log r))$ time to compute $H(\cdot)$, and hence $n$ times as much to compute matrices $M$ and $N$.  We can compute the basis $U$ of $M$ in $O(nm^2)$ time, and the projection $U^T N$ in $O(nrm)$ time.  The basis $W$ takes $O(rm^2)$ time, and the final low rank basis $V = UW$ takes $O(nmk)$ time.  
	Thus the total runtime is 
	$O(nd(s + m \log m + r \log r) + nm^2 + nrm + rm^2 + nmk) = O(nd(s + rm))$ using that $r > m^2 > k^4$ that $m > \log r$, and assuming $n > r$.  Now using $m = O(d(k^2 + k/\eps))$ and $r = O(d m^2/\eps^2) = O(d^3 (k^4 + k^2/\eps^2)/\eps^2)$ and we have a total time of
	$O\left(nds + n\frac{d^4(k^2+\frac{k}{\eps})^3}{\eps^2}\right)$.	
\end{proof}

\subparagraph*{Gaussian Low Rank Approximation with Gaussian Sketch HD in High Dimensions.}
Now, we can also modify the above algorithm into our context for 
rank-$k$ Gaussian low-rank approximation in another way.
Given a point set $X=\{x_1,\dots,x_n\}\subset \mathbb{R}^d$ and a positive integer $s$. 
Suppose $G$ and $H$ are randomly chosen from $\textsc{GaussianSketchHD}_{m_1,\dots,m_s,s}$ and $\textsc{GaussianSketchHD}_{r_1,\dots,r_s,s}$ respectively.
Here, $m_i=\Theta(i(k^2+\frac{k}{\eps}))$ and $r_i=\Theta(\frac{im^2}{\eps^2})$ where $m = \sum_{i=1}^s m_i$.
Compute the $n \times m$ matrix $M$ with $i$th row as $G(x_i)$ and $n \times r$ matrix $N$ with $i$th row as $H(x_i)$.
Let $U$ be the orthonormal basis for column space of $M$ and $W$ be $m\times k$ matrix containing top $k$ left singular vector of $U^TN$.
Finally, return $V=UW$.

Note that a hash function in $\textsc{GaussianSketchHD}$ is not directly applying a hash function in $\TS$.
Therefore, Lemma~\ref{lem:rankk} cannot be directly applied.
However, we can still exploit the structure of it in order to prove the same lemma.

As Avron \etal~\cite{ANW14} suggest, it is generally possible by combining Lemma~\ref{lem:prob_hd} and arguments in~\cite{ANW14,clarkson2009numerical,kannan2014principal}.
We have the following lemma.
Here, denote $A_s$ is a $n\times \frac{d^s-1}{d-1}$ matrix that $i$th row as $z_{x_i}$ for given point set $X = \{x_1,x_2\dots,x_n\}\subset \mathbb{R}^d$.

\begin{lemma}\label{lem:pca_hd}
	Given a point set $X\subset\mathbb{R}^d$, a positive integer $k$ and $\eps>0$.
	The above algorithm returns a matrix $V$ such that
	\[
	\norm{A_s - VV^TA_s}_F^2 \leq (1+\eps)\norm{A_s-[A_s]_k}_F^2
	\]
	where $[A]_k$ is the best rank-$k$ approximation of $A$.
\end{lemma}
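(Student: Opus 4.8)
The plan is to recognize that the stated algorithm is exactly the two-stage ``sketch-and-solve'' scheme of Avron \etal~\cite{ANW14}, only with the single \TS replaced by the block-structured \GSHD map, and then to argue that the linear-algebraic analysis of that scheme goes through verbatim once we supply the concentration guarantee of Lemma~\ref{lem:prob_hd}. First I would rewrite the computed matrices in sketch form. Since the $i$th row of $M$ is $G(x_i) = S z_{x_i}$ and the $i$th row of $A_s$ is $z_{x_i}$, we have $M = A_s S^T$; likewise $N = A_s (S')^T$ for the independent sketch $S'$ used by $H$. Thus the algorithm computes an orthonormal basis $U$ of the column span of $A_s S^T$, takes $W$ to be the top-$k$ left singular vectors of $U^T A_s (S')^T$, and returns $V = UW$ — precisely the Avron \etal~procedure applied to the matrix $A_s$.

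Second, I would isolate the two properties of $S$ and $S'$ that this procedure actually consumes: (i) an approximate matrix multiplication (AMM) bound, $\norm{CD^T - C S^T S D^T}_F \le \eps \norm{C}_F \norm{D}_F$ for matrices $C,D$ with $\frac{d^s-1}{d-1}$ columns, and (ii) an oblivious subspace embedding (OSE) guarantee on the relevant $O(k)$-dimensional subspaces arising in the analysis (those spanned by leading singular directions of $A_s$ and of $U^T A_s$). Property (i) is exactly Lemma~\ref{lem:prob_hd} under the stated choice $m_i = O(i/\eps^2)$, and correspondingly for $S'$ under the larger $r_i$. With both (i) and (ii) in hand, the conclusion $\norm{A_s - VV^T A_s}_F^2 \le (1+\eps)\norm{A_s - [A_s]_k}_F^2$ follows by the standard chain of inequalities in~\cite{ANW14,clarkson2009numerical,kannan2014principal}, which is purely linear-algebraic and touches the randomness only through (i) and (ii); after absorbing constants into $\eps$ this gives the claim.

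The main obstacle is establishing property (ii), because a single second-moment bound such as Lemma~\ref{lem:prob_hd} controls only one fixed quadratic form and does not by itself yield a distortion bound holding uniformly over an entire subspace. Here I would exploit the block-diagonal structure of $S$: it is a direct sum of independent \TS blocks $S_1,\dots,S_s$, where $S_i$ acts on the graded coordinate range corresponding to $z_x^{(i)}$, and these ranges are mutually orthogonal coordinate blocks of $z_x \in \R^{(d^s-1)/(d-1)}$. Consequently the subspace-embedding property for $S$ reduces to the subspace-embedding property of the underlying RecursiveTensorSketch blocks, which are established (via their higher-moment/OSE guarantees) in Ahle \etal~\cite{ahle2019oblivious}; combining the blocks over the direct sum preserves the embedding with total dimension $m = \sum_{i=1}^s m_i$. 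Once (ii) is secured in this way, no further work is needed beyond invoking the Avron \etal~analysis, and the runtime accounting is identical to that obtained row-by-row in Theorem~\ref{thm:main_hd}.
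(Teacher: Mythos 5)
Your top-level architecture matches the paper's proof: write $M = A_sS^T$ and $N = A_s(S')^T$, observe that the analysis of the Avron \etal scheme (Theorem 3.1 of \cite{clarkson2009numerical} together with Theorem 1.1 of \cite{kannan2014principal}) touches the randomness only through an approximate-matrix-multiplication bound and a subspace-embedding bound, supply the former from Lemma~\ref{lem:prob_hd}, and chain the standard inequalities. (One minor parameter point: the AMM property actually consumed is $\norm{AB^T - AS^TSB^T}_F \leq \sqrt{\eps/k}\,\norm{A}_F\norm{B}_F$, i.e.\ $m_i = \Omega(ik/\eps)$ rather than $O(i/\eps^2)$; this is absorbed by the algorithm's choice $m_i = \Theta(i(k^2+\tfrac{k}{\eps}))$.)

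The substantive issue is your property (ii), which you rightly flag as the main obstacle but then resolve by asserting that the subspace-embedding property of the direct sum $S = S_1\oplus\cdots\oplus S_s$ ``reduces to'' that of the individual \TS blocks. That reduction does not go through as stated. Writing an orthonormal basis $Q$ of a $k'$-dimensional subspace of $\R^{(d^s-1)/(d-1)}$ in block form $Q = [Q_1;\dots;Q_s]$, one has $Q^TS^TSQ - I = \sum_i \left(Q_i^TS_i^TS_iQ_i - Q_i^TQ_i\right)$, and the blocks $Q_i$ are not orthonormal --- the subspace's energy is distributed arbitrarily across the graded coordinate ranges --- so per-block OSE guarantees at accuracy $\eps$ only give a total spectral error of order $\eps\sum_i \Tr(Q_i^TQ_i) = \eps k'$; you would have to run every block at accuracy $\eps/k'$, which is exactly the overhead you were hoping the block structure would avoid. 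The paper obtains the needed bound with far less machinery (Lemma~\ref{lem:kspace}): apply the second-moment AMM bound of Lemma~\ref{lem:prob_hd} with both matrices equal to $Q^T$, so that $\norm{I - Q^TS^TSQ}_2 \leq \norm{I - Q^TS^TSQ}_F \leq \eps'\norm{Q}_F^2 = \eps' k'$, and set $\eps' = \eps/k'$ --- this is precisely why that lemma takes $n_i = ik'^2/\eps^2$. So no higher-moment OSE guarantees from \cite{ahle2019oblivious} are needed at all; the same second-moment bound you already invoked for (i) suffices. As written, though, your derivation of (ii) is a genuine gap, and it occurs at the only step of the argument that is not boilerplate.
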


Before getting into Lemma \ref{lem:pca_hd}, the following lemma from \cite{ANW14} which is implied by Lemma \ref{lem:prob_hd} would be helpful.

\begin{lemma}[\cite{ANW14} implied by Lemma \ref{lem:prob_hd} with straightforward modification]\label{lem:kspace}
	For any positive integer $k'$, given any $\frac{d^s-1}{d-1} \times k'$ matrix $B$ with orthonormal columns, we have $\norm{B^TS^TSB - I}_2 \leq \eps$.
	Here, $S$ is randomly chosen from $\GSHD_{n_1,\dots,n_s,s}$ where $n_i = \frac{ik'^2}{\eps^2}$.
\end{lemma}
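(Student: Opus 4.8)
The plan is to derive this operator-norm bound directly from the Frobenius-norm guarantee of Lemma~\ref{lem:prob_hd}, exploiting the trivial domination $\norm{M}_2 \leq \norm{M}_F$ valid for every matrix $M$. First I would instantiate Lemma~\ref{lem:prob_hd} by taking \emph{both} of its input matrices to be $B^T$, which indeed has $\frac{d^s-1}{d-1}$ columns as that lemma requires. Since $B$ has orthonormal columns we have $B^T B = I$ and $\norm{B}_F^2 = k'$, so running Lemma~\ref{lem:prob_hd} with an internal accuracy parameter $\eps'$ gives, with probability at least $1-\delta$,
\[
\norm{I - B^T S^T S B}_F^2 = \norm{B^T B - B^T S^T S B}_F^2 \leq \eps'^2 \norm{B^T}_F^2 \norm{B^T}_F^2 = \eps'^2 (k')^2,
\]
and taking square roots yields $\norm{I - B^T S^T S B}_F \leq \eps' k'$.

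Next I would set the internal parameter to $\eps' = \eps/k'$, so that the Frobenius bound reads $\norm{I - B^T S^T S B}_F \leq \eps$, and then conclude via
\[
\norm{I - B^T S^T S B}_2 \leq \norm{I - B^T S^T S B}_F \leq \eps.
\]
It remains only to translate this parameter choice into the sketch dimensions: Lemma~\ref{lem:prob_hd} requires $m_i = O(i/\eps'^2)$, and substituting $\eps' = \eps/k'$ produces exactly $n_i = O(i (k')^2/\eps^2)$, matching the stated $n_i = \frac{i k'^2}{\eps^2}$.

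The only genuinely delicate point is the bookkeeping that fixes the $(k')^2$ factor in $n_i$. It arises entirely from the product $\norm{B^T}_F^2 \norm{B^T}_F^2 = (k')^2$ on the right-hand side of Lemma~\ref{lem:prob_hd}; the passage from Frobenius to operator norm is free (a single $\leq$) and contributes no additional dependence on $k'$. I would remark that a sharper route, bounding the spectral norm directly through a net or matrix-concentration argument in the style of~\cite{ANW14,clarkson2009numerical}, could in principle trade the $(k')^2$ for $k'$, but the straightforward Frobenius argument already matches the stated bound, so I would not pursue it. Finally, I would note that the entire derivation holds on the probability-$(1-\delta)$ event furnished by Lemma~\ref{lem:prob_hd}, which is the sense in which the (deterministic-looking) conclusion of Lemma~\ref{lem:kspace} is to be read.
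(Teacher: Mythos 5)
Your proposal is correct and is essentially the argument the paper intends: the paper gives no explicit proof, attributing the lemma to \cite{ANW14} as ``implied by Lemma~\ref{lem:prob_hd} with straightforward modification,'' and that modification is exactly your reduction --- apply Lemma~\ref{lem:prob_hd} with $A=B=B^T$ so that $AB^T = B^TB = I$ and $\norm{A}_F^2=\norm{B}_F^2=k'$, rescale $\eps' = \eps/k'$ to absorb the $(k')^2$ factor, and pass from Frobenius to operator norm via $\norm{M}_2 \leq \norm{M}_F$, which accounts for the stated $n_i = i k'^2/\eps^2$. The bookkeeping of where the $k'^2$ comes from is right, and no further argument is needed.
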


\begin{proof}(of Lemma \ref{lem:pca_hd})
	
	In the proof of Theorem 3.1 from \cite{clarkson2009numerical}, the only properties of $S$ used are
	\begin{itemize}
		\item Given any $\frac{d^s-1}{d-1} \times k$ matrix $B$ with orthonormal columns, we have $\norm{B^TS^TSB - I}_2 \leq \eps_0$ for some constant $\eps_0>0$
		\item For any two matrices $A,B$ with $\frac{d^s-1}{d-1}$ columns, $\norm{AB^T - AS^TSB^T}_F \leq \sqrt{\frac{\eps}{k}}\norm{A}_F\norm{B}_F$
	\end{itemize}
	The first property can be shown by Lemma \ref{lem:kspace} since we pick $m_i = \Omega(ik^2)$ and the second property can be shown by Lemma \ref{lem:prob_hd} since we pick $m_i = \Omega(\frac{ik}{\eps})$.
	Also, Theorem 3.1 of \cite{clarkson2009numerical} implies Lemma 4.2 of \cite{clarkson2009numerical} which means there is a matrix $Z$ such that $\norm{UZ - A_s}_F \leq (1+\eps) \norm{A_s - [A_s]_k}_F$ in our context.
	Combining Lemma 4.3 of \cite{clarkson2009numerical}, we have 
	\[
	\norm{U[U^TA_s] - A_s}_F\leq (1+\eps) \norm{A_s - [A_s]_k}_F \numberthis\label{eqn:first}
	\]
	Now, Lemma \ref{lem:kspace} implies Lemma 2.1 from \cite{kannan2014principal} and further implies 
	\[
	\norm{WW^TU^TA_s - A_s}_f \leq (1+\eps) \norm{A_s - [A_s]_k}_F \numberthis\label{eqn:second}
	\]
	by setting $k'$ in Lemma \ref{lem:kspace} be $m$ and picking $r_i = \Theta(\frac{3^im^2}{\eps^2})$. 
	Using equation (\ref{eqn:first}) and (\ref{eqn:second}) in the proof of Theorem 1.1 from \cite{kannan2014principal}, we have our conclusion $\norm{A_s - UWW^TU^TA_s}_F^2 = \norm{A_s - VV^TA_s}_F^2 \leq (1+\eps)\norm{A_s-[A_s]_k}_F^2$.
	
\end{proof}

\begin{theorem}~\label{thm:main_pca_hd}
	%For any $\eps,R,\alpha>0$, let $G$ and $H$ be randomly chosen from $\GSHD_{m,s}$ and $\GSHD_{r,s}$ with 
	%$m=\Theta(3^s(k^2+\frac{k}{\eps}))$,
	%$r=\Theta(\frac{3^sm^2}{\eps^2})$, and 
	%$s=\Theta\left( \frac{\log \frac{4n^2\exp(2R^2)}{\alpha}}{\log \left(\frac{1}{2eR^2}\log \frac{4n^2\exp(2R^2)}{\alpha}\right)} \right)$
	Let $\eps,R,\alpha>0$ and $s=\Theta\left( \frac{\log \frac{4n^2\exp(2R^2)}{\alpha}}{\log \left(\frac{1}{2eR^2}\log \frac{4n^2\exp(2R^2)}{\alpha}\right)} \right)$.
	For $\Lambda^d_R = \{x \in \R^d \mid \|x\|_2 \leq R\}$ and $X \subset \Lambda_R^d$, and let $A_X$ be a pd matrix with elements $(A_X)_{i,j} = K(x_i,x_j) = \exp(-\norm{x_i-x_j}^2)$ for $x_i,x_j \in X$ and factorization $A_X = B_X B_X^T$.  
	%Compute the $n \times k$ matrix $V = UW$ as the product of orthogonal basis $U,W$ from $n \times m$ matrix $M_X$ with rows $G(x_i)$ and $n \times r$ matrix $N_X$ with rows $H(x_i)$ using $x_i \in X$ as described above. 
	Then with constant probability
	\[
	\norm{B_X - VV^T B_X}_F^2 \leq (1 + \eps) \norm{B_X - [B_X]_k}_F^2 + \alpha.
	\]
	The runtime to compute $V$ is $O(nds^2 + n\frac{3^{4s}(k^2+\frac{k}{\eps})^3}{\eps^2})$.
\end{theorem}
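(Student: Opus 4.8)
The plan is to follow the template of the proof of Theorem \ref{thm:main_pca} almost verbatim, substituting the high-dimensional primitives: Lemma \ref{lem:pca_hd} in place of Lemma \ref{lem:rankk} to obtain the rank-$k$ guarantee on the sketched, truncated matrix, and Lemma \ref{lem:tail_hd} in place of Lemma \ref{lem:tail} to control the tail of the (different) infinite expansion from Lemma \ref{lem:gaussian_inner_hd}. Let $A_s$ be the $n \times \frac{d^s-1}{d-1}$ matrix whose $i$th row is $z_{x_i}$, so that $A_s A_s^T = K^{\textsf{HD}}_{X,s}$ while $A_X = B_X B_X^T = K_X$; they agree on their first $s$ layers. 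The algorithm runs the \GSHD-based sketches $G$ (dimension $m$) and $H$ (dimension $r$) on the rows of $A_s$, and by Lemma \ref{lem:pca_hd} the returned $V = UW$ satisfies, with constant probability, $\norm{A_s - VV^TA_s}_F^2 \leq (1+\eps)\norm{A_s - [A_s]_k}_F^2$.

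First I would quantify the truncation gap. Since the columns of $V$ are orthonormal, each row of $I - VV^T$ has norm at most $2$, so any row $v$ of $I-VV^T$ obeys $(\sum_i |v_i|)^2 \leq 4n$. Applying Lemma \ref{lem:tail_hd} with $\xi = 4n^2$, $a = \exp(2R^2)$, $b = 2eR^2$, together with the prescribed choice of $s$ and Lemma \ref{lem:tail-GS}, gives $v^T(B_XB_X^T - A_sA_s^T)v \leq \alpha/n$ for each such row, and hence
\[
\norm{B_X - VV^TB_X}_F^2 = \Tr\!\left((I-VV^T)(B_XB_X^T)(I-VV^T)^T\right) \leq \norm{A_s - VV^TA_s}_F^2 + \alpha.
\]

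Next I would upper bound $\norm{A_s - [A_s]_k}_F^2$ by $\norm{B_X - [B_X]_k}_F^2$. The key structural fact, now supplied by Lemma \ref{lem:gaussian_inner_hd} rather than Lemma \ref{lem:gaussian_inner}, is that $B_XB_X^T - A_sA_s^T = K_X - K^{\textsf{HD}}_{X,s}$ is a sum of outer products of the tail layers $z_x^{(i)}$ for $i \geq s$, and is therefore positive semidefinite. Comparing the optimal rank-$k$ projection of $A_s$ against the top-$k$ left singular space of $B_X$ and using this semidefiniteness exactly as in Theorem \ref{thm:main_pca} yields $\norm{A_s - [A_s]_k}_F^2 \leq \norm{B_X - [B_X]_k}_F^2$. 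Chaining the three inequalities produces the claimed almost-relative-error bound.

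The main obstacle is the runtime bookkeeping, where the high-dimensional sketch behaves qualitatively differently from the low-dimensional one. Computing each row $G(x_i)$ and $H(x_i)$ costs $O(\sum_{i=1}^s (i\, m_i \log m_i + id))$ and the analogous expression in $r_i$; with $m_i = \Theta(i(k^2 + k/\eps))$ one gets $m = \Theta(s^2(k^2 + k/\eps))$, and the delicate point is that Lemma \ref{lem:pca_hd} forces $r_i = \Theta(3^i m^2/\eps^2)$, so the geometric sum $r = \sum_i r_i$ is dominated by its last term and introduces the $3^{4s}$ factor. I would then account for the $O(nm^2)$ basis computation, the $O(nrm)$ projection $U^TN$, the $O(rm^2)$ step for $W$, and the $O(nmk)$ product $V=UW$, and simplify (using that $r$ dominates $m^2$) to the stated $O(nds^2 + n\,3^{4s}(k^2 + k/\eps)^3/\eps^2)$. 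Confirming that the exponential $3^i$ blow-up in $r_i$ is genuinely what the approximate-matrix-multiplication step inside Lemma \ref{lem:pca_hd} demands, and that it does not leak into the correctness argument above, is the part requiring the most care.
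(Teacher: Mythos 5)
Your proposal follows essentially the same route as the paper's own proof: Lemma \ref{lem:pca_hd} for the sketched rank-$k$ guarantee, Lemma \ref{lem:tail_hd} with $\xi = 4n^2$ applied to the rows of $I - VV^T$ for the additive $\alpha$, positive semidefiniteness of $B_XB_X^T - A_sA_s^T$ (via Lemma \ref{lem:gaussian_inner_hd}) to bound $\norm{A_s - [A_s]_k}_F^2$ by $\norm{B_X - [B_X]_k}_F^2$, and then the runtime bookkeeping. The only divergence is in that last bookkeeping step, where you attribute the $3^{4s}$ to the geometric growth of the $r_i$ from Lemma \ref{lem:pca_hd}; the paper's own proof text instead simplifies with $r = O(s m^2/\eps^2)$ and arrives at $s^4$ rather than $3^{4s}$, an internal inconsistency of the paper that your reading resolves in favor of the theorem statement.
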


\begin{proof}
	%	Let $v_x^{(i)}$ be a vector in $\mathbb{R}^s$ with $j$th coordinate to be $\exp(-x_i^2)\sqrt{\frac{2^{j-1}}{(j-1)!}} x^{j-1}_i$ for any $x\in \mathbb{R}^d$.  \jeff{???}	
	By Lemma~\ref{lem:pca_hd}, we have
	\[
	\norm{A_s - VV^TA_s}_F^2 \leq (1+\eps)\norm{A_s-[A_s]_k}_F^2.  
	\] 
	
	From Lemma \ref{lem:tail_hd}, $v^T(B_XB_X^T - A_sA_s^T)v \leq \left(\sum_{i=1}^n \abs{v_i}\right)^2 \exp(2R^2)\left(\frac{2eR^2}{s}\right)^s \leq \alpha/n$ with our setting of $s$ as long as $\left(\sum_{i=1}^n \abs{v_i}\right)^2 \leq 4n$.
	Indeed the columns of $V$ are orthonormal, so the norm of each row of $I-VV^T$ is at most $2$, and thus $\left(\sum_{i=1}^n \abs{v_i}\right)^2 \leq 4n$.  
	
	We now have
	\begin{align*}
	\norm{B_X-VV^TB_X}_F^2
	& =
	\Tr((I-VV^T)B_XB_X^T(I-VV^T)^T) \\
	& \leq
	\norm{A_s-VV^TA_s}_F^2 + \Tr((I-VV^T)(B_XB_X^T - A_sA_s^T)(I-VV^T)^T) \\
	& \leq
	\norm{A_s-VV^TA_s}_F^2 + n \cdot (\alpha/n)
	\end{align*}
	
	Also by Lemma \ref{lem:gaussian_inner_hd}, $B_XB_X^T - A_sA_s^T$ is still positive definite. 
	Therefore, 
	\begin{align*}
	&
	\norm{A_s-[A_s]_k}_F^2 \\
	& =
	\norm{A_s-UU^TA_s}_F^2 \quad\quad \text{where $U$ is the matrix of top-$k$ left singular vectors of $A_s$} \\
	& \leq
	\norm{A_s - U'U'^TA_s}_F^2 \quad\quad \text{where $U'$ is the matrix of top-$k$ left singular vectors of $B_X$}\\
	& =
	\norm{B_X-[B_X]_k}_F^2 - \Tr((I-U'U'^T)(B_XB_X^T - A_sA_s^T)(I-U'U'^T)) \\
	& \leq
	\norm{B_X-[B_X]_k}_F^2 \quad\quad \text{recall that $B_XB_X^T - A_sA_s^T$ is positive definite}
	\end{align*}
	
	We can plug in everything.
	\begin{align*}
	\norm{B_X-VV^TB_X}_F^2
	& \leq
	\norm{A_s-VV^TA_s}_F^2 + \alpha \\
	& \leq
	\norm{A_s-[A_s]_k}_F^2 + \alpha\\
	& \leq
	\norm{B_X-[B_X]_k}_F^2 + \alpha
	\end{align*}
	
	To see the running time, it takes $O(\sum_{i=1}^s i(d+m_i\log m_i))$ to compute $G(\cdot)$ and \\$O(\sum_{i=1}^s i(d+r_i\log r_i))$ time to compute $H(\cdot)$.  Using that $r_i > m_i^2 > k^4$ and $m_i > 1/\eps$ then it takes less time to compute $H(\cdot)$ than $G(\cdot)$, and this runtime is $O(d s^2 + s^2 r_s \log r_s) = O(d s^2 + s^2 r \log r)$ since the $r_i$ values are exponentially increasing in $i$, and so $r_s = O(r)$ for $r = \sum_{i=1}^s r_i$.  
	The time to compute $M$ and $N$ is $n$ time longer.  
	
	We can compute the basis $U$ of $M$ in $O(nm^2)$ time, and the projection $U^T N$ in $O(nrm)$ time -- this step is the post-sketch bottlneck.  The basis $W$ takes $O(rm^2)$ time, and the final low rank basis $V = UW$ takes $O(nmk)$ time.  
	Thus the total runtime is 
	$O(n (ds^2 + s^2 r \log r) + nm^2 + nrm + rm^2 + nmk) = O(n(ds^2 + rm))$ using that $r > m^2 > k^4$ that $m > s^2 \log r$, and assuming $n > r$.  
	Now using $m = O(s^2(k^2 + k/\eps))$ and $r = O(s m^2/\eps^2) = O(s^3 (k^4 + k^2/\eps^2)/\eps^2)$ and we have a total time of
	$O(nds^2 + n s^4(k^2+\frac{k}{\eps})^3 / \eps^2)$.	
\end{proof}

\end{document}